\def\cA{{\cal A}}
\def\cX{{\cal X}}
\newcommand{\bz}{{\bf z}}
\newcommand{\by}{{\bf y}}
\newcommand{\bX}{{\bf X}}
\newcommand{\bx}{{\bf x}}
\newcommand{\mbR}{\mathbb{R}}
\newcommand{\mbE}{\mathbb{E}}
\newcommand{\bc}{\begin{center}}
\newcommand{\ec}{\end{center}}
\newcommand{\be}{\begin{equation}}
\newcommand{\ee}{\end{equation}}
\newcommand{\ba}{\begin{array}}
\newcommand{\ea}{\end{array}}
\newcommand{\bean}{\begin{eqnarray*}}
\newcommand{\eean}{\end{eqnarray*}}
\newcommand{\bea}{\begin{eqnarray}}
\newcommand{\eea}{\end{eqnarray}}
\newcommand{\ben}{\begin{enumerate}}
\newcommand{\een}{\end{enumerate}}
\newcommand{\bed}{\begin{itemize}}
\newcommand{\eed}{\end{itemize}}
\newcommand{\bs}{\begin{slide}}
\newcommand{\es}{\end{slide}}
\newtheorem{theorem}{Theorem}
\newtheorem{definition}{Definition}
\title{EXoN: EXplainable encoder Network}
\author{
    SeungHwan An\textsuperscript{\rm 1},
    Hosik Choi\textsuperscript{\rm 2},
    Jong-June Jeon\textsuperscript{\rm 1}
}
\begin{document}

\maketitle

\begin{abstract}
We propose a new semi-supervised learning method of Variational AutoEncoder (VAE) which yields a customized and explainable latent space by EXplainable encoder Network (EXoN). Customization means a manual design of latent space layout for specific labeled data. To improve the performance of our VAE in a classification task without the loss of performance as a generative model, we employ a new semi-supervised classification method called SCI (Soft-label Consistency Interpolation). The classification loss and the Kullback-Leibler divergence play a crucial role in constructing explainable latent space. The variability of generated samples from our proposed model depends on a specific subspace, called activated latent subspace. Our numerical results with MNIST and CIFAR-10 datasets show that EXoN produces an explainable latent space and reduces the cost of investigating representation patterns on the latent space. 
\end{abstract}

\section{Introduction}
\label{sec:1}

Variational AutoEncoder (VAE) \cite{kingma2013auto, rezende2014stochastic} aims to reconstruct a well-representing latent space and recover an original observation well. In general, the probabilistic model used in VAE is parameterized by neural networks defined as two maps from the domain of observations to the latent space and vice versa. However, since the probability model of the VAE is not written in a closed form, the maximum likelihood method is unsuitable for application to the VAE. As an alternative, the Variational Bayesian method is popularly applied to the model to maximize the Evidence of Lower Bound (ELBO) \cite{jordan1999introduction}.

Plenty of semi-supervised learning methods for the VAE \cite{kingma2014semi, maaloe2016auxiliary, siddharth2017learning, maaloe2017semi, li2019disentangled, feng2021shot, hajimiri2021semi} have been introduced, and especially \cite{maaloe2017semi, hajimiri2021semi} applied the mixture prior distribution such that the VAE model provides explainable latent space according to labels. However, existing semi-supervised VAE models still have practical limitations. 

\cite{kingma2014semi, maaloe2016auxiliary, siddharth2017learning, li2019disentangled, feng2021shot} introduced an additional latent space representing labels and assumed the probabilistic independence of the label and the other latent variables. While the model is simply formulated, the trained latent space cannot provide explainable and measurable quantitative information used to generate a new image by interpolating between two images. So, it is difficult to impose structural restrictions on the latent space, such as the proximity of latent features across some labeled observations. Also, the trained latent space differs according to the training process, so the latent space is not explained consistently even with the same dataset \cite{maaloe2017semi, hajimiri2021semi}. For example, it is difficult to obtain information about interpolated images from latent space before the trained latent space is entirely investigated by observations. Thus, a manual design of the latent space is required for an explainable representation model. 

The current study proposes a new semi-supervised VAE model, wherein we have incorporated the manual design to improve the clarity of the model. The model employs a Gaussian mixture for prior and posterior latent distributions \cite{dilokthanakul2016deep, zheng2019disentangling, willetts2019disentangling, mathieu2019disentangling, guo2020variational} and focuses on constructing an explainable encoder of VAE. 
In our model, the latent space is customized for each component's correspondence to a specific label and proximity. Since the latent space is shared with the decoder and classifier, each centroid of the mixture distribution is trained as an identifier representing a specific label. 
Therefore, the latent space is manually decomposed by labels, and a user can obtain a customized explainable latent space. 


In addition, we propose a measure of the representation power of latent variables with which the importance of latent subspace can be investigated. It is found that the encoder of our VAE model selectively activates only a part of the latent space, and the latent subspace represents the characteristics of generated samples. Because the activation is measured by a posterior variance linked with the encoder, we call the encoder EXoN (EXplainable encoder Network) by borrowing the term in the field of gene biology.

This paper is organized as follows. Section 2 briefly introduces Variational AutoEncoder, and Section 3 proposes our VAE model, including its derivation. Section 4 shows the results of numerical simulations. Concluding remarks follow in Section 5.

\section{Preliminary}
\label{sec:2}

\subsection{VAE for Unsupervised Learning}
\label{sec:2.1}

Let $\bx \in \mathcal{X} \subset \mathbb{R}^m$ and $\bz \in \mathcal{Z} \subset \mathbb{R}^d$ with $d < m$ be the observed data and the latent variable. $p(\bx)$ and $p(\bz)$ denote the probability density functions (pdf) of $\bx$ and $\bz$. Let $p(\bx|\bz)$ be the conditional pdf of $\bx$ for a given $\bz$ and $\bx|\bz \sim N(D(\bz), \beta \cdot {\bf I}),$ where $D: \mathcal{Z} \mapsto \mathcal{X}$, ${\bf I}$ is the $m \times m$ identity matrix and the observation noise $\beta > 0$. $D(\bz)$ is a neural network with parameter $\theta$. To emphasize dependence on parameters, denote $D(\bz)$ and $p(\bx|\bz)$ by $D(\bz;\theta)$ and $p(\bx|\bz; \theta, \beta)$, respectively. $p(\bx|\bz; \theta, \beta)$ is referred to as the decoder of VAE. The term of decoder is also used as the map $\bz \mapsto (D(\bz;\theta), \beta)$, because $p(\bx|\bz; \theta, \beta)$ is fully parameterized by $(D(\bz;\theta), \beta)$.

The parameter of VAE $(\theta, \beta)$ is estimated by Variational method \cite{kingma2013auto, rezende2014stochastic} that maximizes ELBO. The ELBO of $\log p(\bx; \theta, \beta)$ is obtained from the inequality
\bea \label{eq:elbo}
&& \log p(\bx;\theta, \beta) \\
&\geq& \mathbb{E}_{q(\bz|\bx)} [\log p(\bx|\bz;\theta, \beta)]- \mathcal{KL}\left( q(\bz|\bx) \| p(\bz) \right), \nonumber
\eea
where $\mathcal{KL}(q\|p)$ denotes Kullback-Leibler divergence from $p$ to $q$. \cite{kingma2013auto, rezende2014stochastic} employ a neural network with parameter $\phi$ as $q(\bz|\bx;\phi)$, and obtain the objective function under finite samples $x_1, \cdots, x_n$,
\bea \label{eq:elbo1}
\sum_{i=1}^n \mathbb{E}_{q} [\log p(x_i|\bz;\theta, \beta)] - \mathcal{KL}\left( q(\bz|x_i;\phi) \| p(\bz) \right).
\eea
Here, $q(\bz|\bx;\phi)$ is referred to as the encoder of VAE or the posterior distribution over the latent variables. Multivariate Gaussian distribution is a popular choice for $q(\bz|\bx;\phi)$ in which the mean and covariance are given by neural network models. Thus, the parameters of VAE consist of $\phi$ in the encoder and $(\theta, \beta)$ in the decoder. 

In practice, the VAE is fitted by maximizing a stochastically approximated ELBO for \eqref{eq:elbo1} with respect to $(\theta, \beta, \phi)$. The approximated ELBO is given by  
\bea \label{eq:elbo3}
\sum_{i=1}^n -\frac{1}{2} \| x_i - D(z_i;\theta) \|^2 - \beta \cdot \mathcal{KL}(q(\bz|x_i;\phi)\|p(\bz)) \nonumber \\
- \frac{nm}{2} \log 2\pi\beta 
\eea
where $z_i$ is a sample from $q(\bz|x_i;\phi)$ and $\|\cdot\|$ is $l_2$-norm. The VAE is fitted by maximizing \eqref{eq:elbo3} with respect to $(\theta, \beta, \phi)$ \cite{lucas2019don}. The first term in \eqref{eq:elbo3} is the precision of generated samples. Since the KL-divergence is always non-negative, \cite{higgins2016beta, mathieu2019disentangling, li2019disentangled} explain $\beta$ as a tuning parameter regularizing $\theta$ and $\phi$, and the last term of \eqref{eq:elbo3} is considered as constant.

\section{Proposed Model}
\label{sec:3}

We propose a new VAE model with a customized explainable latent space where a conceptual center of a latent distribution for a specific label can be freely assigned. Let $\by \in \mathcal{Y} = \{1, \cdots, K\}$ be a discrete random variable and denote the joint probability density of $(\bx, \by)$ by $p(\bx, \by)$. In this paper, high dimensional observations $\bx$ are partly labeled by $\by$. Denote the sets of indices for labeled and unlabeled samples by $I_L$ and $I_U$. 

\subsection{Model Assumptions}
\label{sec:3.1}

The prior latent distribution for each label is assumed as $\bz|\by=k \sim N(m_k, diag\{s_k^2\})$ and $p(\by = k) = w_k$ for $k \in \mathcal{Y}$ where $diag\{a\}, a \in \mbR^d$ indicates $d \times d$ matrix whose diagonal elements are $a$. The prior distribution marginalized by $\by$ is 
\bean \label{eq:prior1}
p(\bz) = \sum_{k=1}^K w_k \cdot \mathcal{N}(\bz | m_k, diag\{s_k^2\}).
\eean
Here, $m_k \in \mathbb{R}^d$ and $s_k^2 \in \mathbb{R}_+^d$ are pre-determined parameters denoting the conceptual center and dispersion of the latent variable for each label. The choice of $m_k$ and $s_k^2$ is the customization of the latent space. Since $m_k$ and $w_k$ for all $k$ are fixed, the notation of $m_k$ and $diag\{s_k^2\}$ is omitted in $p(\bz)$.

The encoder $p(\bz|\bx)$ is assumed as a mixture distribution, $p(\bz|\bx) = \sum_{k=1}^K p(\by=k|\bx) \cdot p(\bz|\by=k,\bx)$. Since using $p(\bz|\bx)$ is computationally prohibitive, the proposal distribution $w(\by=k|\bx;\eta)$ for $p(\by=k|\bx)$ and $g(\bz|\by=k,\bx;\xi) = \mathcal{N}(\bz|\mu_k(\bx;\xi), diag\{\sigma^2_k(\bx;\xi)\})$ for $p(\bz|\by=k,\bx)$ are introduced. The two proposal distributions are multinomial and Gaussian, whose parameters are modeled by neural networks. The posterior distribution is approximated by 
\bea \label{eq:encoder1}
q(\bz|\bx;\eta,\xi) = \sum_{k=1}^K w(\by=k|\bx;\eta) \cdot g(\bz|\by=k,\bx;\xi),
\eea
where the parameters of the neural networks are $\eta$ and $\xi$. Note that the posterior mixture weight $w(\by=k|\bx;\eta)$ decompose the latent space according to labels. Given image $\bx$ and label $\by$, the posterior mixture weight assigns a latent variable to the mixture component corresponding to the given label $\by$ and the latent space is separated by labels.

The decoder is assumed by 
\bean
p(\bx|\bz; \theta,\beta) = \mathcal{N}(\bx | D(\bz;\theta), \beta \cdot {\bf I}),
\eean
where the true label $\by$ is not included. Even though a natural decoder can be considered as a mixture Gaussian $p(\bx|\bz;\theta,\beta) = \sum_{k=1}^K p(\by=k|\bz) \cdot \mathcal{N}(\bx|D_k(\bz;\theta), \beta \cdot {\bf I})$, the mixture distribution is approximated by a uni-modal Gaussian distribution especially when $p(\by = k|\bz)$ is assumed to be close to 1 for some $k$. The latent space is separated by $w(\by|\bx;\eta)$ in our encoder and we can obtain such $p(\by|\bz)$. Thus, the validity of the assumption mainly depends on the classification performance of $w(\by|\bx;\eta)$. Although the decoder variance $\beta$ is trainable, we fixed it due to computational issues. 


\subsection{EXoN: Semi-Supervised VAE}
\label{sec:3.2}

Our proposed VAE model is derived from the joint pdf $p(\bx, \by)$. $p(\bx, \by)$ is decomposed into $p(\bx)$ and $p(\by|\bx)$, and the derivation of the ELBO only for $\log p(\bx)$ leads to \eqref{eq:objective0}. 
In the derivation, $p(\by|\bx)$ is approximated by the proposal classification model $w(\by|\bx;\eta)$. 
\bea \label{eq:objective0}
&& \log p(\bx,\by;\theta, \beta) \nonumber \\
&=& \log p(\bx;\theta, \beta) + \log p(\by|\bx;\theta, \beta) \nonumber \\
&\geq& \mathbb{E}_{q} [\log p(\bx|\bz; \theta, \beta)] - \mathcal{KL}\left( q(\bz|\bx; \eta, \xi) \| p(\bz) \right) \nonumber \\
&& + \log p(\by|\bx; \theta, \beta) \nonumber \\
&\simeq& \mathbb{E}_{q} [\log p(\bx|\bz; \theta, \beta)] - \mathcal{KL}\left( q(\bz|\bx; \eta,\xi) \| p(\bz) \right) \nonumber \\
&& + \log w(\by|\bx; \eta). 
\eea
The first two terms in \eqref{eq:objective0} are the typical ELBO used in the unsupervised VAE learning, and the remained term is the classification loss. Note that these terms are coupled with the shared parameter $\eta$. The classification loss term plays the role of training the posterior mixture weights. Therefore, $w(\by|\bx;\eta)$ with lower classification error can separate the latent space more clearly by $q(\bz|\bx;\eta,\xi)$.

Whenever $p(\by|\bx; \theta, \beta) = w(\by|\bx; \eta)$,
\bean
\log p(\bx,\by;\theta, \beta) \geq \mathcal{L}(\bx;\theta,\beta,\xi,\eta) + \log w(\by|\bx;\eta),
\eean
where 
\bean
\mathcal{L}(\bx;\theta,\beta,\xi,\eta) &=& \mathbb{E}_{q} [\log p(\bx|\bz;\theta, \beta)] \\
&& - \mathcal{KL}^u\left( q(\bz|\bx;\eta, \xi) \| p(\bz) \right)
\eean
and $\mathcal{KL}^u\left( q(\bz|\bx;\eta, \xi) \| p(\bz) \right)$ is the upper bound of $\mathcal{KL}\left( q(\bz|\bx;\eta, \xi) \| p(\bz) \right)$ \cite{wang2019topic, guo2020variational} which is written in closed-form as 
\bean
&& \mathcal{KL}(w(\by|\bx;\eta) \| p(\by)) \\
&+& \sum_{k=1}^K w(\by=k|\bx;\eta) \\
&& \times \mathcal{KL} \left( q(\bz|\bx,\by=k;\eta, \xi) \| p(\bz|\by=k) \right).
\eean
Therefore, the lower bound on the joint likelihood for the entire dataset is 
\bea \label{eq:objective2}
\sum_{i \in I_L \cup I_U} \mathcal{L}(x_i;\theta,\beta,\xi,\eta) + \sum_{i \in I_L} \log w(y_i|x_i;\eta).
\eea
\cite{kingma2014semi} first employed the classification loss (the second term in \eqref{eq:objective2}) as a penalty function of VAE and \cite{maaloe2016auxiliary, li2019disentangled, siddharth2017learning} use the same penalty function in the subsequent papers. In our semi-supervised VAE model, the penalty function is applied with a similar idea. However, the derivation of the regularized objective function stems from \eqref{eq:objective0} unlike the existing studies. 

\subsection{SCI: Soft-label Consistency Interpolation}
\label{sec:3.3}

The derivation of \eqref{eq:objective2} is mathematically plausible, but it does not guarantee state-of-the-art semi-supervised classification performance.
To improve the performance of the classification task, we propose a new loss function for the pseudo-labeling semi-supervised classification method \cite{iscen2019label, berthelot2019mixmatch, arazo2020pseudo} called SCI (Soft-label Consistency Interpolation) loss. 

The SCI loss consists of three parts, 1) an interpolated new image with a pair of unlabeled images, 2) a pair of pseudo-label of the unlabeled images, and 3) a convex combination of the cross entropy. Let $(\bx^1, \bx^2)$ be a pair of images and let $\Tilde{\bx} = \rho \cdot \bx^1 + (1-\rho) \cdot \bx^2$ be the interpolated image. Denote the discrete probability $w(\by|\bx; \hat{\eta})$ by $f(\bx; \hat{\eta})$. The pseudo-labels of $\bx^1$ and $\bx^2$ are defined by $\Tilde{\by}^1 = f(\bx_1; \hat{\eta})$ and $\Tilde{\by}^2 = f(\bx_2; \hat{\eta})$ for a given estimate $\hat{\eta}$ (true label is used for labeled dataset). Note that the pseudo-label is not a function of $\hat{\eta}$ but a non-trainable quantity determined by $\hat{\eta}$. Then the SCI loss for $\eta$ with $(\bx^1, \bx^2)$ is defined by
\bea \label{eq:sci}
&& \mbox{SCI}((\bx^1, \Tilde{\by}^1), (\bx^2, \Tilde{\by}^2); \eta) \\
&=& \rho \cdot H(\Tilde{\by}^1, f(\Tilde{\bx}, \eta)) + (1-\rho) \cdot H(\Tilde{\by}^2, f(\Tilde{\bx}, \eta)), \nonumber
\eea
where $H(p,q)$ is the cross entropy of the distribution $q$ relative to a distribution $p$. 


The SCI loss is motivated by the assumption of consistency interpolation. 
\begin{definition} \label{def:CI} 
Given two data point $\bx^1, \bx^2$, if
\bean
&& f(\rho \cdot \bx^1 + (1 - \rho) \cdot \bx^2 ; \eta^*) \\
&=& \rho \cdot f(\bx^1; \eta^*) + (1 - \rho) \cdot f(\bx^2; \eta^*)
\eean
for $\forall \rho \in [0, 1]$, then the consistency interpolation \cite{zhang2017mixup, verma2019interpolation} is satisfied for the parameter $\eta^*$.
\end{definition}
The consistency interpolation assumes the existence of a linear map $f$ from an image to a pseudo-label. It is well known that such a mix-up strategy can improve the generalization error \cite{zhang2017mixup}. 

Interestingly, the estimation of $f(\cdot; \eta^*)$ in our VAE model is also used in other existing semi-supervised learning methods \cite{feng2021shot, arazo2020pseudo} and the following algorithm provides a general framework to estimate such a $f(\cdot; \eta^*)$ in the VAE model. 
Let $\eta^{(t)}$ be an estimate of $\eta$ obtained by the $t$th step while training the VAE, and $\eta^{(t+1)}$ be the solution of following optimal interpolation problem \cite{feng2021shot}
\bea \label{eq:optimal_interpolation_problem1}
\min_{\eta} && \rho \cdot \mathcal{KL}\left( f(\bx^1;\eta^{(t)}) \| f(\Tilde{\bx} ; \eta) \right) \nonumber \\
&+& (1 - \rho) \cdot \mathcal{KL}\left( f(\bx^2;\eta^{(t)}) \| f(\Tilde{\bx} ; \eta) \right).
\eea
Then, it is easily shown that
\bea \label{eq:approximated_CI}
&& f(\rho \cdot \bx^1 + (1 - \rho) \cdot \bx^2 ; \eta^{(t+1)}) \nonumber \\ 
&=& \rho \cdot f(\bx^1 ; \eta^{(t)}) + (1 - \rho) \cdot f(\bx^2 ; \eta^{(t)}).
\eea
If there exists $\eta^*$ such that $\eta^{(t)} \rightarrow \eta^*$ as $t \rightarrow \infty$ given two data point $\bx^1, \bx^2$, then \eqref{eq:approximated_CI} finally implies the consistency interpolation. Since \eqref{eq:optimal_interpolation_problem1} is equivalent with \eqref{eq:sci} up to constant, \eqref{eq:sci} is introduced in our proposed objective function. We also found that using stochastically augmented images for SCI loss helps our classifier achieve higher accuracy (see Appendix \ref{app:4} for detailed augmentation techniques).

Finally, the objective function is given by
\bea \label{eq:objective3}
&-& \sum_{i \in I_L \cup I_U} \mathcal{L}(x_i;\theta,\beta,\xi,\eta) - \sum_{i \in I_L} \log w(y_i|x_i;\eta) \nonumber \\
&-& \frac{\lambda_1}{\beta} \sum_{i \in I_L} \log w(y_i|x_i;\eta) \nonumber\\
&+& \frac{\lambda_1}{\beta} \sum_{i \in I_L} \mbox{SCI}((x^1_i, y^1_i), (x^2_i, y^2_i);\eta) \nonumber \\
&+& \frac{\lambda_2(t)}{\beta} \sum_{i \in I_U} \mbox{SCI}((x^1_i, \Tilde{y}^1_i), (x^2_i, \Tilde{y}^2_i);\eta), 
\eea
where $x_i^1 = x_i$, $x_i^2$ is a randomly chosen sample, and $\lambda_2(t)$ is a ramp-up function. If $\lambda_1$ and $\lambda_2(t)$ are set to be large enough, the mixture components of $q(\bz|\bx;\eta,\xi)$ are shrunk toward those of $p(\bz)$ according to the true matching labels, and the latent space is well separated. The shared parameter $\eta$ leads to this adaption of $q(\bz|\bx;\eta,\xi)$ to $p(\bz)$. 

\subsection{Activated Latent Subspace}
\label{sec:3.4}

In this section, we investigated the meaning of the explainability of the latent space. Denote the random vector associated with the distribution of the $k$th component of the mixture distribution (prior or posterior distribution) by $\bz^k = (\bz_1^k, \cdots, \bz_d^k)$. Then, what is the role of the posterior conditional variance $\mbox{Var}(\bz_j^k|\bx;\xi)$ in the encoding process? 

To answer above question, consider an extreme case where $\mbox{Var}(\bz_j^k|\bx;\xi)$ goes to zero for given $k$. If $\mbox{Var}(\bz_j^k|\bx;\xi) \rightarrow 0$, then $\bz_j^k$ becomes constant. $\bz_j^k$ has a deterministic relationship with given $\bx$, implying that $\bz_j^k$ can explain the given data point. Therefore, the value of $\mbox{Var}(\bz_j^k|\bx;\xi)$ represents the explainability of the latent space, and we call $j$ coordinate is activated.

Interestingly, there is a theorem which shows a connection between our objective function \eqref{eq:objective3} and the posterior conditional variance $\mbox{Var}(\bz_j^k|\bx;\xi)$.
\begin{theorem} \label{thm:1}
Let $p(\bz)$ the mixture prior distribution defined in Section \ref{sec:3.1} and let $q(\bz|\bx;\eta,\xi)$ be \eqref{eq:encoder1}. 
\bean
&& \frac{d}{2} - \sum_{k=1}^K \sum_{j = 1}^d \frac{w_k}{2} \mbE_{\bx|\by=k} \left[ \frac{\mbox{Var}(\bz_j^k|\bx;\xi)}{\mbox{Var}(\bz_j^k)} \right] \\
&\leq& \mbE_{\bx} \Big[ \sum_{k=1}^K w(\by=k|\bx;\eta) \\
&& \times \mathcal{KL} \Big( q(\bz|\bx,\by=k;\eta, \xi) \| p(\bz|\by=k) \Big) \Big] \\
&\leq& \sum_{k=1}^K \sum_{j = 1}^d \frac{w_k}{2} \mbE_{\bx|\by=k} \left[ \frac{\mbox{Var}(\bz_j^k)}{\mbox{Var}(\bz_j^k|\bx;\xi)} \right] - \frac{d}{2}.
\eean
\end{theorem}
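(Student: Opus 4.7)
My plan is to reduce Theorem~\ref{thm:1} to an algebraic fact about a single diagonal-Gaussian KL together with two tangent-line inequalities for $\log$. First I would write $\mathcal{KL}(q(\bz|\bx,\by=k;\eta,\xi)\|p(\bz|\by=k))$ in closed form: since both distributions are diagonal Gaussians, with $q = \mathcal{N}(\mu_k(\bx;\xi),\mbox{diag}\{\sigma_k^2(\bx;\xi)\})$ and $p = \mathcal{N}(m_k,\mbox{diag}\{s_k^2\})$, the KL decomposes into a coordinate-wise sum whose $j$-th summand equals, up to the factor $1/2$, the expression $\sigma_{k,j}^2(\bx)/s_{k,j}^2 + (\mu_{k,j}(\bx)-m_{k,j})^2/s_{k,j}^2 - 1 - \log(\sigma_{k,j}^2(\bx)/s_{k,j}^2)$. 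Using the calibration condition $w(\by=k|\bx;\eta) = p(\by=k|\bx)$ already invoked in the ELBO derivation of Section~\ref{sec:3.2}, Bayes's rule rewrites the outer expectation $\mbE_\bx[\sum_k w(\by=k|\bx;\eta)\,\cdot\,]$ as $\sum_k w_k \mbE_{\bx|\by=k}[\,\cdot\,]$, so the task reduces to bounding $\mbE_{\bx|\by=k}[\mathcal{KL}_k]$ coordinate by coordinate.

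The next step is to peel off the $\log$ using the two tangent-line inequalities at $t = 1$: $\log t \le t-1$ gives $-\log(\sigma^2/s^2) \ge 1 - \sigma^2/s^2$ and hence a lower bound on each KL, while $\log t \ge 1 - 1/t$ gives $-\log(\sigma^2/s^2) \le s^2/\sigma^2 - 1$ and hence an upper bound. Plugging these into the closed form cancels the $-1$ against the $+1$ in the lower bound (leaving only the squared-mean term) and introduces the target $s^2/\sigma^2$ in the upper bound. The remaining cross term $\mbE_{\bx|\by=k}[(\mu_{k,j}(\bx)-m_{k,j})^2]/s_{k,j}^2$ is then handled by the law of total variance: interpreting $\mbox{Var}(\bz_j^k)$ as the marginal variance obtained when $\bx \sim p(\bx|\by=k)$ and $\bz^k|\bx \sim q(\bz|\bx,\by=k;\eta,\xi)$, the identification $\mbox{Var}(\bz_j^k) = s_{k,j}^2$ forces $\mbE_{\bx|\by=k}[\mu_{k,j}(\bx)] = m_{k,j}$ and $\mbE_{\bx|\by=k}[(\mu_{k,j}(\bx) - m_{k,j})^2] = s_{k,j}^2 - \mbE_{\bx|\by=k}[\sigma_{k,j}^2(\bx)]$. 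Substituting this identity cancels the stray $\sigma^2/s^2$ contributions cleanly: each coordinate contributes $\tfrac{1}{2}(1 - \mbE[\sigma^2/s^2])$ from below and $\tfrac{1}{2}(\mbE[s^2/\sigma^2] - 1)$ from above, and summing over $j$ and $k$ with $\sum_k w_k = 1$ yields exactly the $d/2$ constants and the two bounds in the theorem.

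The main obstacle is conceptual rather than technical: both the Bayes swap and the law-of-total-variance step quietly presuppose calibration between the variational family and the true conditionals --- namely that $w(\by|\bx;\eta)$ matches $p(\by|\bx)$ and that the aggregated posterior $\int q(\bz|\bx,\by=k;\eta,\xi) p(\bx|\by=k)\,d\bx$ agrees with the prior component $p(\bz|\by=k)$ in its first two moments. I would state both assumptions explicitly at the top of the proof; everything after that is mechanical manipulation of a Gaussian KL together with the two elementary $\log$ inequalities.
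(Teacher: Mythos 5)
Your proposal is correct and follows essentially the same route as the paper's own proof: the closed-form diagonal-Gaussian KL, the Bayes swap under the calibration $w(\by|\bx;\eta)=p(\by|\bx)$, the law of total variance identifying the aggregated posterior's first two moments with those of the prior component, and the tangent-line bounds on $\log$. The only cosmetic difference is ordering --- the paper first reduces the expected KL exactly to $\tfrac{1}{2}\sum_j \mbE_{\bx|\by=k}\bigl[\log(\mbox{Var}(\bz_j^k)/\mbox{Var}(\bz_j^k|\bx;\xi))\bigr]$ and then applies Jensen followed by $\log x \le x-1$, whereas you apply the tangent bounds pointwise before averaging --- and you are in fact more explicit than the paper about the two calibration assumptions it invokes silently.
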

\begin{proof}
See Appendix \ref{app:1.2}.
\end{proof}
Theorem \ref{thm:1} means that the KL-divergence upper bound in our objective function is bounded by the ratio of the prior and posterior variances. The lower bound of Theorem \ref{thm:1} can be re-written as $\sum_{k=1}^K \sum_{j = 1}^d w_k/2 \cdot \mbox{Var}_{\bx|\by=k} [\mbE(\bz_j^k|\bx;\xi)] / \mbox{Var}(\bz_j^k)$, so it can be interpreted as the coefficient of determination $R^2$ which measures the proportion of latent variable variation explained by an observation.

On the other hand, a small $\beta$ increases the weight of the generated sample precision term and indirectly decreases the relative weight of the KL-divergence term in \eqref{eq:objective3}. It implies that the KL-divergence term is relaxed (not fully minimized).
So, the relaxation of the KL-divergence term induces an increase in the upper bound of Theorem \ref{thm:1}. It means that there exists an activated coordinate $j$ such that $\mbox{Var}(\bz_j^k|\bx;\xi)$ shrinks to zero for some $k$, as the prior mixture weight $w_k$ and variance $\mbox{Var}(\bz_j^k)$ are fixed due to pre-design. Thus, Theorem \ref{thm:1} shows that the relaxation of the KL-divergence obtained by tuning $\beta$ is closely related to controlling the explainability of the latent space for $\bx$.

Based on Theorem \ref{thm:1}, we propose a statistics V-nat (VAE-natural unit of information) which screens such activated latent coordinates, 
\bean
\log \mbE_{\bx|\by=k} \left[ \mbox{Var}(\bz_j^k) / \mbox{Var}(\bz_j^k|\bx;\xi) \right].
\eean
Additionally, we define the set of activated latent coordinates as activated latent subspace for $k$-labeled dataset for some $\delta > 0$,
\bean
\cA_k(\delta) = \bigg\{ j \in [d]: \log \mbE_{\bx|\by=k} \left[ \frac{\mbox{Var}(\bz_j^k)}{\mbox{Var}(\bz_j^k|\bx;\xi)} \right] > \delta \bigg\}
\eean
where $[d] = \{1, \cdots, d\}$ 
Our numerical study found that the subspace represents the informative characteristics of generated samples, and the subspace can be effectively used to produce a high-quality image (see Section \ref{sec:4.2}). These results are consistent with those of VQ-VAE \cite{van2017neural} that an image generation process mainly depends on a nearly deterministic encoding map, and the refinement of the map can improve the quality of generated images. 

\section{Experiments}
\label{sec:4}


\subsection{MNIST Dataset}
\label{sec:4.1}

We have used the MNIST dataset \cite{lecun2010mnist} to consider the 2-dimensional latent space in our VAE model. The values were scaled in the range of -1 to 1. The encoder returns the 10-component Gaussian mixture distribution parameters, mixing probability, mean vector, and diagonal covariance elements. Thus, the encoder maps $\cX$ to $\left( (0,1) \times \mathbb{R}^{2} \times \mathbb{R}_+^{2} \right)^{10}$. Especially, the mixing probabilities are produced by the classifier in the encoder. Gumbel-Max trick \cite{gumbel1954statistical, jang2016categorical} is used for sampling discrete latent variables. Detailed network architectures of the encoder, decoder and classifier are described in Appendix \ref{app:2} Table \ref{table:mnist_network}, \ref{table:mnist_network_classifier}. 

\begin{figure}[ht]
    \centering 
    \includegraphics[width=0.5\columnwidth]{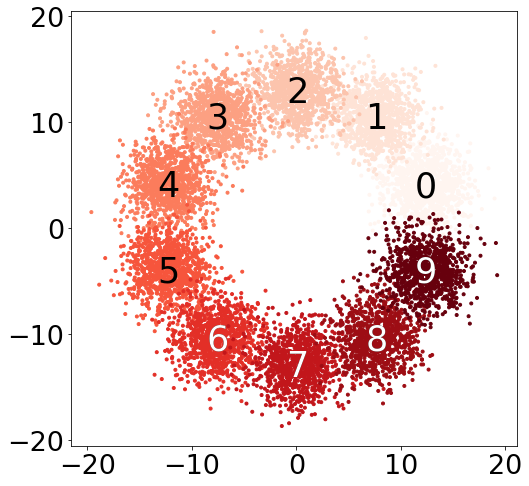}
    \caption{Scatter plot of samples from $p(\bz)$.}
    \label{fig:our_prior}
\end{figure}
The customized $p(\bz)$ is illustrated in Figure \ref{fig:our_prior}. Each label is assigned counterclockwise from 3 O'clock on the component of the mixture. Note that Figure \ref{fig:our_prior} illustrates an example of conceptual centers. The $k$th component of the mixture distribution corresponds to the distribution of the digit $(k-1)$ in the MNIST dataset, for $k=1,\cdots,10$ (see Appendix \ref{app:4} for detailed pre-design settings).

Our model is compared with \cite{kingma2014semi, maaloe2016auxiliary, li2019disentangled, hajimiri2021semi, feng2021shot}, and the simulation results show that our fitted model achieves a competitive classification performance, 3.12\% error with 59,900 unlabeled and 100 labeled images. (see Appendix \ref{app:2} Table \ref{table:mnist_cls_comparison} for comparison results). For implementation details, see Appendix \ref{app:4}. 

\begin{figure*}[ht]
\centering
\includegraphics[width=0.9\linewidth]{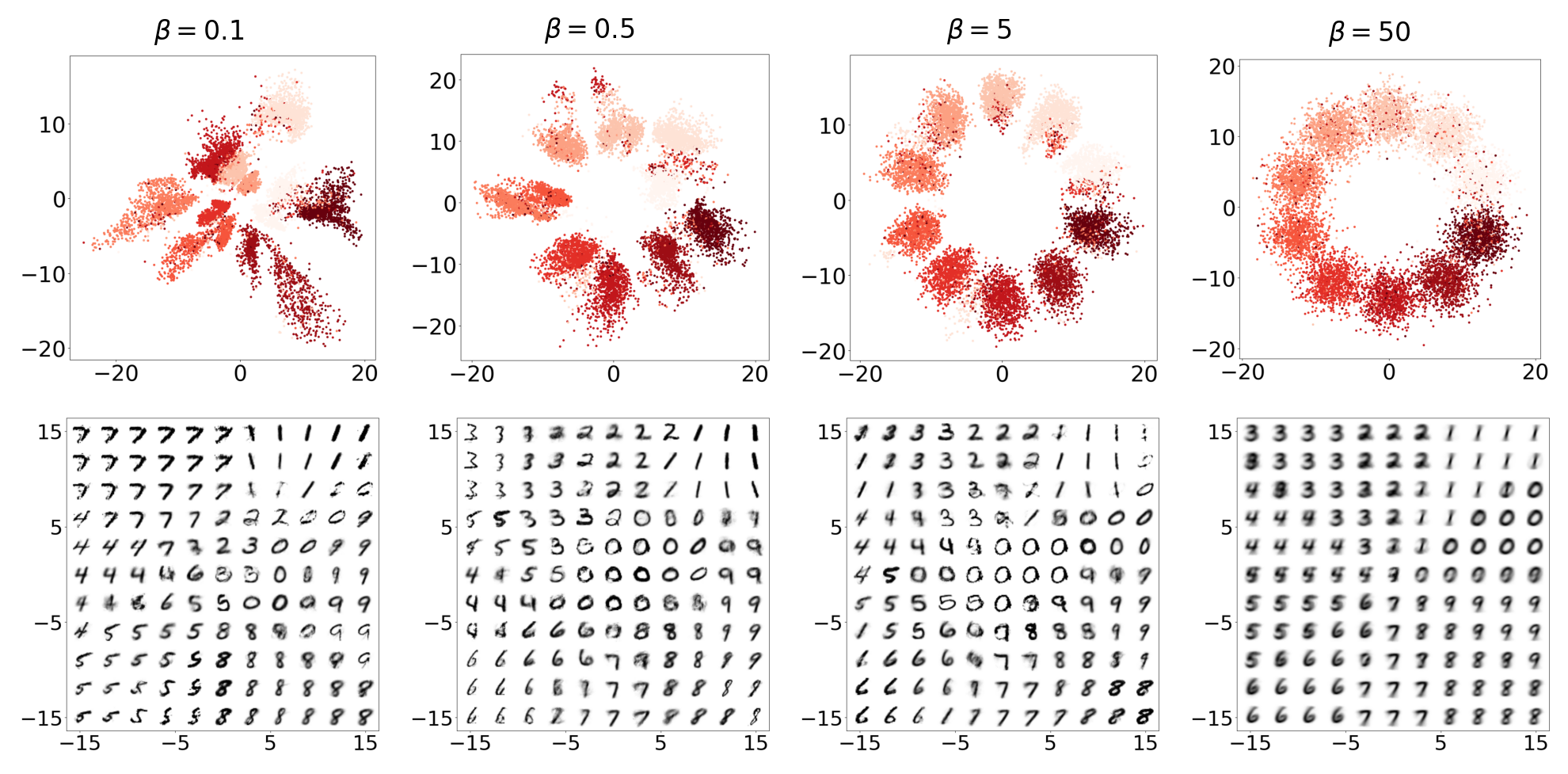}
\caption{Top row: scatter plot of $z \sim q(\bz|x;\eta,\xi)$ given the test dataset. Bottom row: generated images from grid points on the latent space.}
\label{fig:posterior_path}
\end{figure*}

\subsubsection{Effect of Regularizations}
\label{sec:4.1.1}

First, the role of tuning parameter $\beta$ of \eqref{eq:objective3} is investigated in fitted latent space \citep{lucas2019don}. The top panels of Figure \ref{fig:posterior_path} display the samples from $q(\bz|x_i;\eta,\xi)$ where $x_i$ is an observation in the test dataset, and the bottom panels show images generated from grid points on the latent space. 

The top panels demonstrate that the large $\beta$ regularizes $q(\bz|x;\eta,\xi)$ to $p(\bz)$ by indirectly increasing the weight of the KL-divergence term in the objective function. The bottom panels illustrate that each generated image exactly matches the label defined on the pre-designed latent space. Also, it is confirmed that generated images are naturally interpolated on the pre-designed latent space according to our arrangement of conceptual centers. These results show that the proposed VAE yields an explainable latent space with the labels. See Appendix \ref{app:2.1} for additional evaluation results (the negative average single-scale structural similarity (negative SSIM) \cite{wang2004image, zheng2019disentangling}, the classification error, and the KL-divergence) with various $\beta$ values.

\subsubsection{Diversity of Generated Images}
\label{sec:4.1.2}

The bottom panels of Figure \ref{fig:posterior_path} show that the images are generated with lower diversity for larger $\beta$. This result can be explained by the mutual information $I(\cdot,\cdot)$. The expectation of $\mathcal{K}\mathcal{L}^u(q(\bz|\bx;\eta,\xi)\|p(\bz))$ in \eqref{eq:objective3} is rewritten in terms of mutual information as 
\bea \label{eq:info_obj}
&& \mbE_{p(\bx)} \left[\mathcal{K}\mathcal{L}^u(q(\bz|\bx;\eta,\xi)\|p(\bz))\right] \nonumber \\
&=& I(\bx, \by;\eta) + \mathcal{KL}(w(\by;\eta) \| p(\by)) \nonumber \\
&& + \mbE_{p(\by)} [I(\bx, \bz|\by;\eta,\xi)] \nonumber \\
&& + \mbE_{p(\by)} [\mathcal{KL}(q(\bz|\by;\eta,\xi) \| p(\bz|\by))]. 
\eea
See Appendix \ref{app:1.3} for detailed derivation.

To maximize \eqref{eq:objective3} under a large $\beta$, \eqref{eq:info_obj} should be nearly zero and the mutual information between $\bx|\by$ and $\bz|\by$ be close to zero for all classes $\by$ as well. The conditional independence of $\bx$ and $\bz$ implies that $D(\bz;\theta)$ does not depend on $\bz$ for each $\by$, because $\bx|\bz \sim N(D(\bz;\theta), \beta \cdot {\bf I})$ is assumed. For this reason, as shown in the bottom row of Figure \ref{fig:posterior_path}, the latent mixture component for a specific label is not able to capture the complex pattern of observations that belong to a corresponding label when $\beta$ is large.

\subsubsection{Customized Latent Space}
\label{sec:4.1.3}


\begin{figure}
    \centering
    \subfigure[Parted-VAE \cite{hajimiri2021semi}]{\includegraphics[width=0.47\columnwidth]{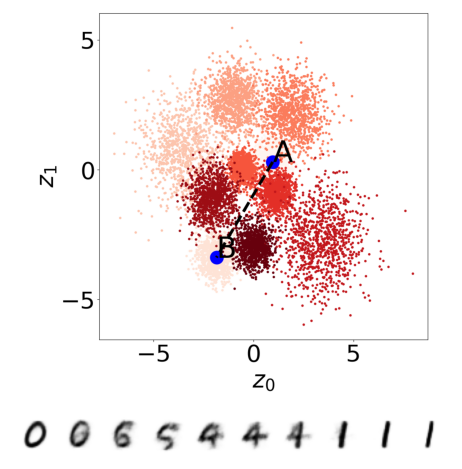}} 
    \subfigure[The EXoN ($\beta=5$)]{\includegraphics[width=0.47\columnwidth]{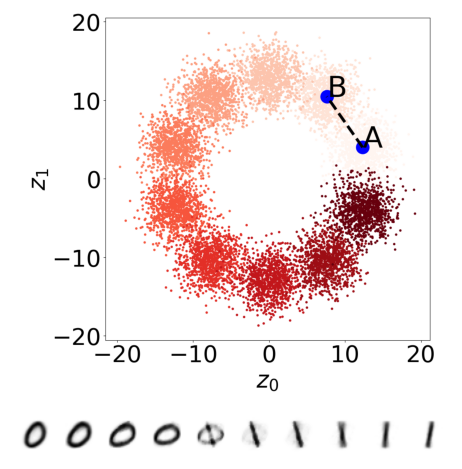}} 
    \caption{The interpolation path from point A (label 0) to B (label 1) and reconstructed images.}
    \label{fig:latent_space_comparison}
\end{figure}

Figure \ref{fig:latent_space_comparison} shows two series of reconstructed images from interpolation on the prior structure. The latent variable A and B are sampled from mixture components of labels 0 and 1, respectively. Suppose the interpolation path from point A to B produces interpolated images with labels 0 and 1. However, the left panel of Figure \ref{fig:latent_space_comparison} shows reconstructed images whose labels are neither 0 nor 1 in the middle of the interpolation path. 

It means that the interpolation path is unpredictable before training with Parted-VAE \cite{hajimiri2021semi}. However, the interpolation path with our model consists of only interpolated images with labels 0 and 1 since our latent space can be manually designed (see the right panel of Figure \ref{fig:latent_space_comparison}). Furthermore, we can pre-determine the interpolation path and the resolution of interpolation by controlling the proximity between mixture components (see Appendix \ref{app:2.2}). 

\subsection{CIFAR-10 Dataset}
\label{sec:4.2}

We apply our VAE model to the CIFAR-10 dataset \cite{krizhevsky2009learning} of which images have 10 labels. We evaluated our model with 400 labeled images per class and 46,000 unlabeled images. All values of the images are scaled to the range of $(-1, 1)$. 256-dimensional latent space and the 10-mixture Gaussian distribution are employed. Each component of the prior mixture distribution has a separate mean vector, and all components share the same covariance (see Appendix \ref{app:4} for detailed pre-designed priors, hyper-parameters, and implementation settings). Gumbel-Max trick \cite{gumbel1954statistical, jang2016categorical} is used for sampling discrete latent variables. The network architectures of the encoder, decoder and classifier are shown in Appendix \ref{app:3} Table \ref{table:cifar10_network}, \ref{table:cifar10_network_classifier}. 

\subsubsection{Comparison}
\label{sec:4.2.2}

\begin{table*}[ht]
  \centering
  \begin{tabular}{lrr}
    \toprule
    models & error (\%) & Inception Score \\
    \midrule
    $\Pi$-model (4.5M) \cite{laine2016temporal} & 17.53 $\pm$ 0.15 & - \\
    *VAT (1.5M) \cite{miyato2018virtual} & 10.55 $\pm$ 0.05 & - \\ 
    MixMatch (5.8M) \cite{berthelot2019mixmatch} & \textbf{5.91} $\pm$ 0.36 & - \\
    PLCB (4.5M) \cite{arazo2020pseudo} & 7.82 $\pm$ 0.18 & - \\
    M2 (4.5M) \cite{kingma2014semi} & 27.30 $\pm$ 0.25 & 1.86 $\pm$ 0.02 \\
    Parted-VAE (5.8M) \cite{hajimiri2021semi} & 34.00 $\pm$ 1.67 & 1.73 $\pm$ 0.20 \\
    SHOT-VAE (5.8M) \cite{feng2021shot} & 6.45 $\pm$ 0.28 & \textbf{3.43} $\pm$ 0.05 \\
    \midrule
    Ours (4.5M) ($\beta=0.01$) & 7.45 $\pm$ 0.48 & 3.14 $\pm$ 0.05 \\
    \bottomrule
  \end{tabular}
\caption{Mean and standard deviation of test dataset classification error for 4,000 labeled dataset and the Inception Score from 5 replicates of the experiment. The numbers in parenthesis are the number of classifier parameters. The Inception Scores are computed using 10,000 generated images. * means that the result is from the original paper.}
\label{table:comparison}
\end{table*}

\begin{figure}[ht]
    \centering
    \subfigure[SHOT-VAE (between same classes)]{
    \includegraphics[width=.9\columnwidth]{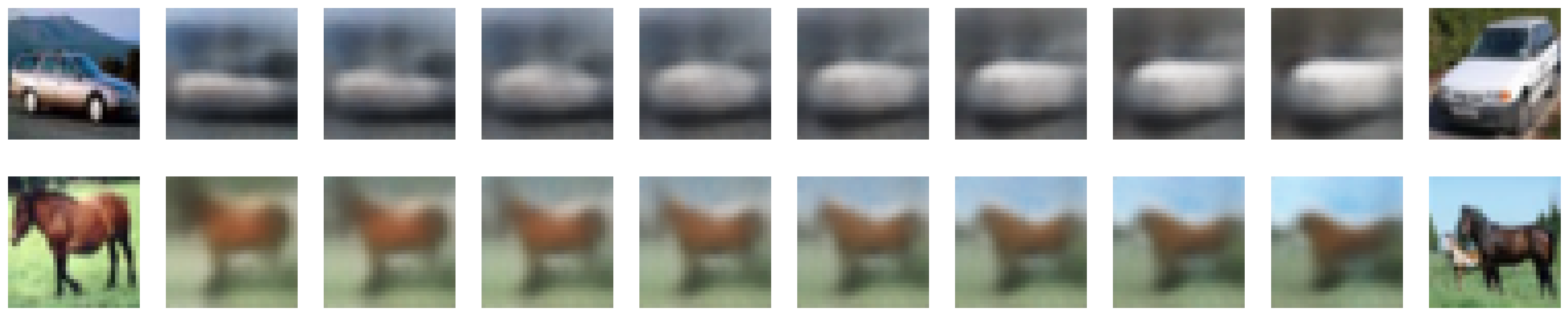}
    }
    \subfigure[EXoN ($\beta=0.05$) (between same classes)]{
    \includegraphics[width=.9\columnwidth]{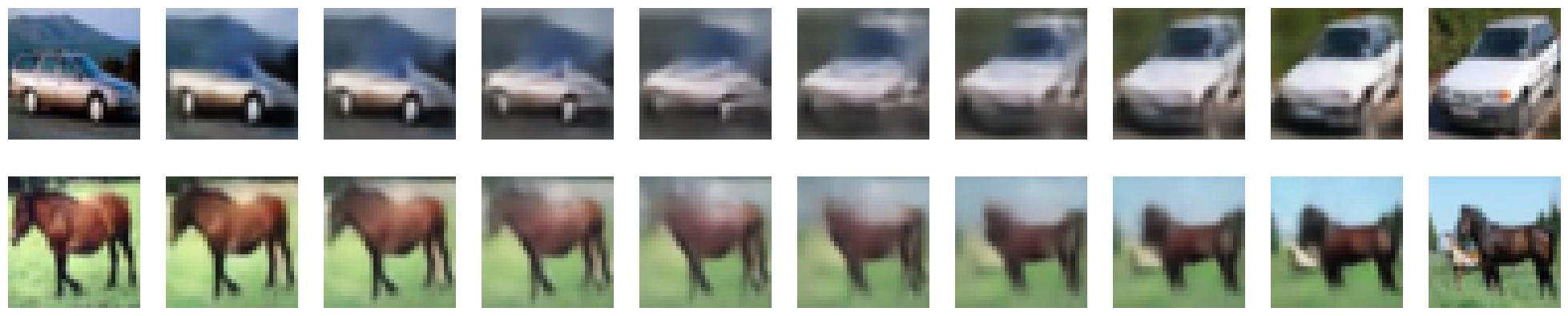}
    }
    \subfigure[Parted-VAE (between different classes)]{
    \includegraphics[width=.9\columnwidth]{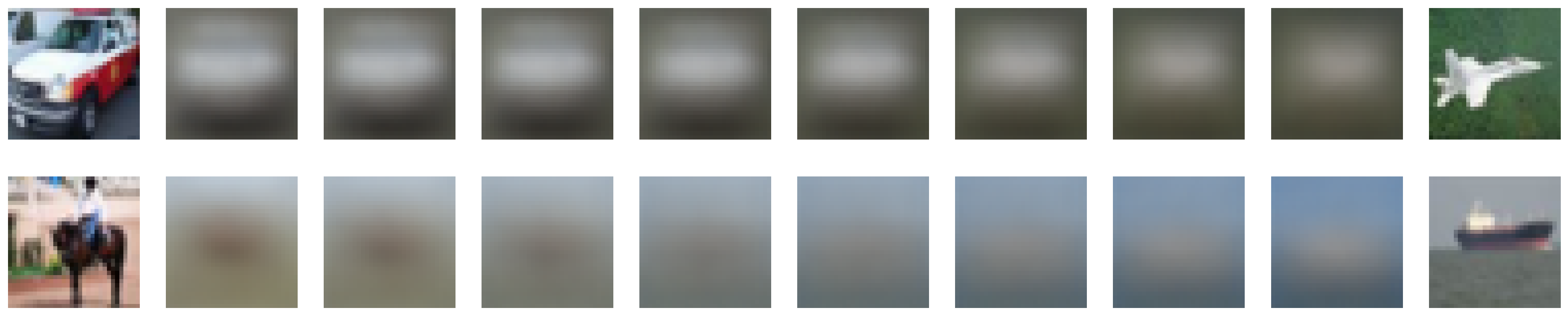}
    }
    \subfigure[EXoN ($\beta=0.05$) (between different classes)]{
    \includegraphics[width=.9\columnwidth]{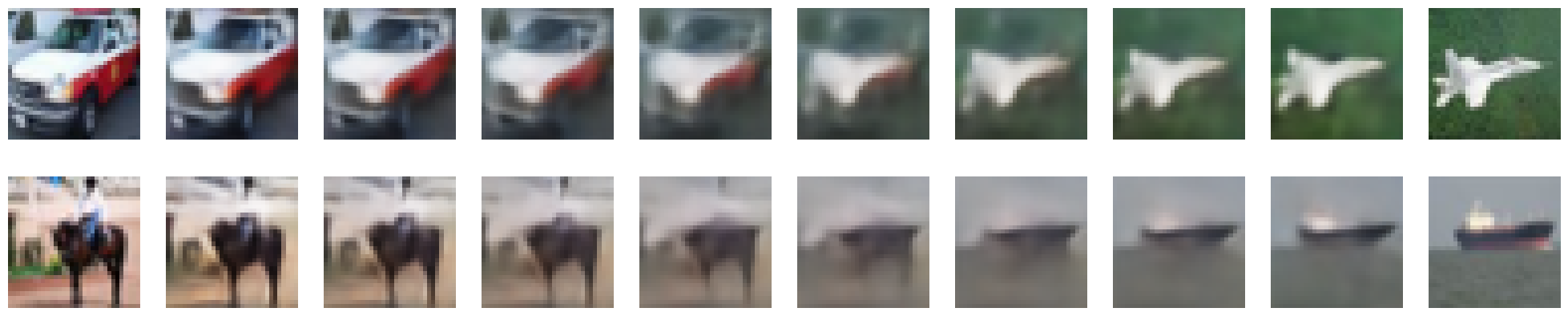}
    }
\caption{
Series of images obtained by interpolating two points on the latent space(or subspace), which various models produce. The most left and right are original images.
}
\label{fig:interpolation}
\end{figure}

Table \ref{table:comparison} shows quantitative comparison results of the classification performance in the CIFAR-10 benchmark setting and image generation performance. The classification models \cite{laine2016temporal, miyato2018virtual, berthelot2019mixmatch, arazo2020pseudo} are not generative, so the Inception Score is not computed. Although the EXoN is not the best in the Inception Score, our model has a remarkable image interpolation quality. It is shown in Figure \ref{fig:interpolation} which visualizes the reconstructed images from interpolation. This Figure indicates that our model achieves much better recovering and interpolation performance than other models \cite{hajimiri2021semi, feng2021shot} (see Appendix Figure \ref{fig:train_recon} for more reconstructed images by our model).


\subsubsection{Activated Latent Subspace}
\label{sec:4.2.1}

\begin{figure}[ht]
    \centering
    \subfigure[EXoN ($\beta=0.05$)]{
    \includegraphics[width=.75\columnwidth]{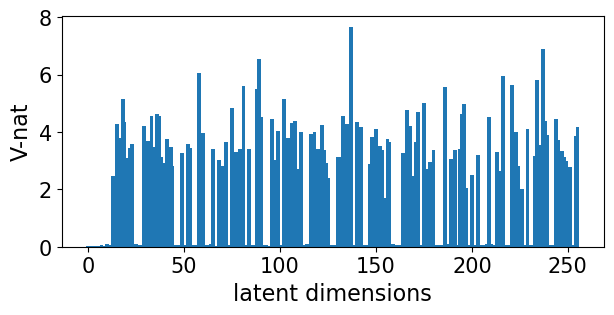}
    \label{fig:measure}
    }
    \subfigure[EXoN ($\beta=0.05$)]{
    \includegraphics[width=.9\columnwidth]{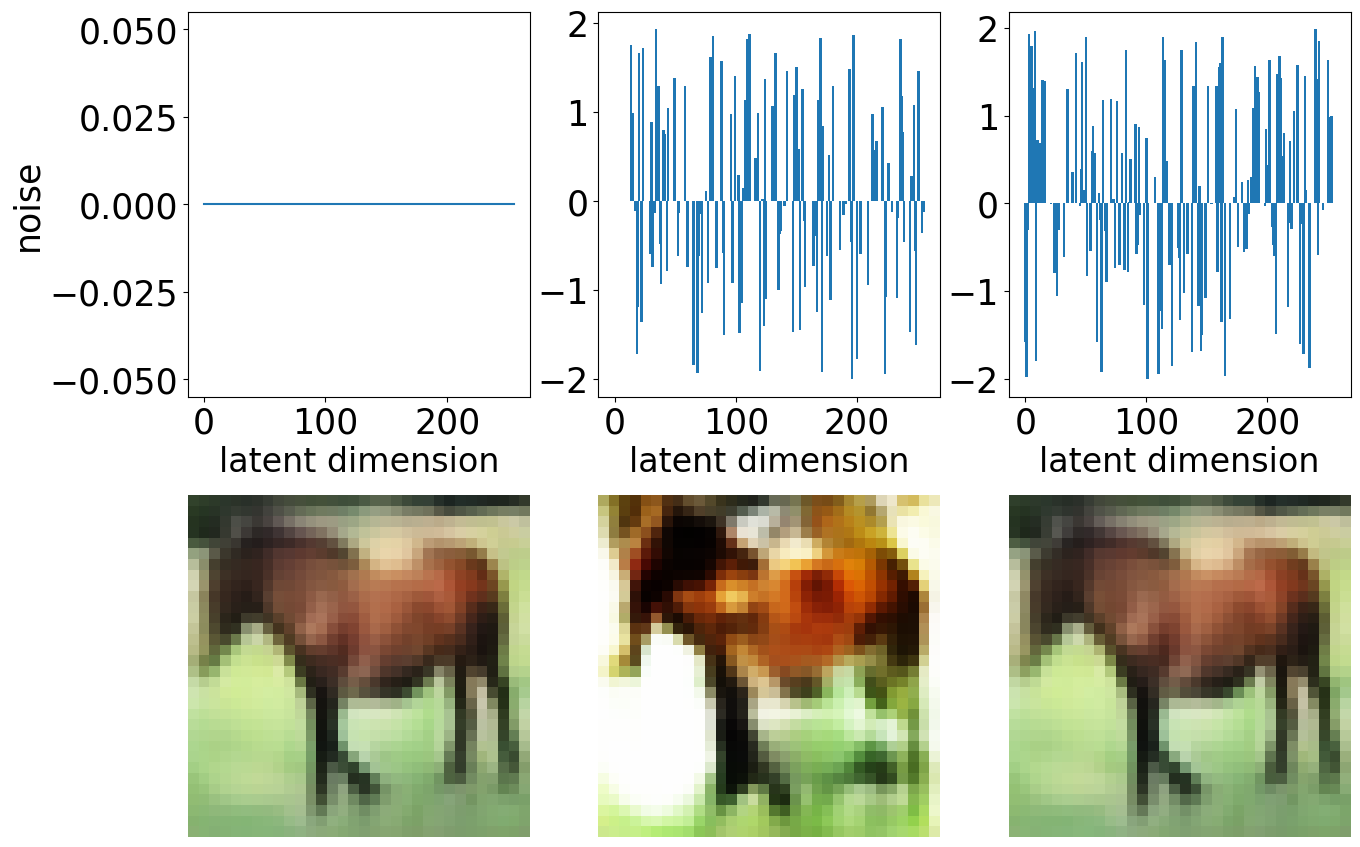}
    \label{fig:signal}
    }
\caption{
(a) Plot of V-nat values for automobile class.
(b) Top row: visualization of noises added to the latent variable, bottom row: generated image from the corresponding perturbed latent variable.
}
\end{figure}

The properties of the activated latent subspace are investigated. For $\beta=0.05$, the latent space with $|\cA_k(1)|=115$ is obtained for the automobile class. The associated V-nat values are displayed in Figure \ref{fig:measure} (see Appendix \ref{app:3.1} for other classes). We generate images on three cases of latent variables, 1) the original latent variable, 2) the latent variable perturbed by the uniform noise $U(-2,2)$ on $\cA_k(1)$, 3) the latent variable perturbed by the uniform noise $U(-2,2)$ on $\cA^c_k(1)$. Figure \ref{fig:signal} displays examples of noise values and generated images. 

The two images of left and right in Figure \ref{fig:signal} seem identical even though their latent variables are far from each other. While, the middle image of the bottom panel is distorted compared with the other images, which confirms that only the activated latent subspace determines the features of generated images. Motivated by these numerical results in Figure \ref{fig:signal}, we manipulated images using only the activated latent subspace (see results in Appendix \ref{app:3.2}). 

\begin{table*}[ht]
    \centering
    \begin{tabular}{lrrrrr}
    \toprule
    $\beta$ & 0.01 & 0.05 & 0.1 & 0.5 & 1  \\
    \midrule
    Inception Score & 3.14 $\pm$ 0.05 & 3.36 $\pm$ 0.05 & 3.37 $\pm$ 0.10 & 3.17 $\pm$ 0.08 & 2.86 $\pm$ 0.06 \\
    $|\cA_k(1)|$ & 189.2 $\pm$ 22.7 & 114.6 $\pm$ 4.22 & 82.8 $\pm$ 12.50 & 36.00 $\pm$ 5.87 & 22.00 $\pm$ 1.10 \\
    error (\%) & 7.45 $\pm$ 0.48 & 7.27 $\pm$ 0.19 & 7.51 $\pm$ 0.25 & 8.18 $\pm$ 0.63 & 9.50 $\pm$ 1.07 \\
    \bottomrule
    \end{tabular}
\caption{For various $\beta$ values, mean and standard deviation of the Inception Score \protect \cite{salimans2016improved}, cardinality of activated latent subspace $|\cA_k(1)|$ of automobile class and test dataset classification error with 4,000 labeled dataset from 5 replicates of the experiment.}
\label{table:path}
\end{table*}
Theorem \ref{thm:1} theoretically supports that relaxation of KL-divergence (shrinkage of $\beta$) leads to a large set $\cA_k(1)$. Meanwhile, \eqref{eq:info_obj} implies that strong regularization of KL-divergence (increasing $\beta$) leads to decreasing $\mbE_{p(\by)} [I(\bx, \bz|\by;\eta,\xi)]$, which results in a small set $\cA_k(1)$. These theoretical results are shown numerically in Table \ref{table:path}. 

\section{Conclusion and Limitations}
\label{sec:5}

This paper proposes a new method to construct an explainable latent space in semi-supervised learning VAE. The explainable latent space can be obtained by combining classification losses (cross-entropy and SCI loss) and the relaxed KL-divergence term in our objective function. By the classification, the latent space can be effectively decomposed according to the mixture components and the true labels of observations. In addition, it is shown that the relaxation of the KL-divergence increases the dimension of the activated latent subspace which determines the characteristics of the images generated from our proposed model. The activated latent subspace can be discovered through the V-nat measure.

In short, the explainability of our proposed model is demonstrated by two points 1) the label-specific conceptual means and variances of the latent distribution, and 2) the features of the given images can be analyzed in association with the activated latent subspace. Thus, the practical advantage of our method is that the manually interpolated images corresponding to the user's specific purpose can be achieved by customizing the prior distribution structure in advance. We guess that the clustering method would replace the role of the classification loss in our VAE model, and leave the development of the EXoN in unsupervised learning as our future work. 

\bibliography{aaai22}

\clearpage
\appendix

\section{Appendix}

\subsection{Theoretical Derivations}
\label{app:1}


\subsubsection{Upper bound of the KL-divergence}
\label{app:1.1}

We use the upper bound of the KL-divergence from $p(\bz)$ to $q(\bz|\bx;\eta, \xi)$ \cite{wang2019topic, guo2020variational}  by
\bea \label{eq:kl_upperbound}
&& \mathcal{KL}^u\left( q(\bz|\bx;\eta, \xi) \| p(\bz) \right) \nonumber \\
&\equiv& \mathcal{KL}(w(\by|\bx;\eta) \| p(\by)) \nonumber \\
&+& \sum_{k=1}^K w(\by=k|\bx;\eta) \nonumber \\
&& \times \mathcal{KL} \Big( \mathcal{N}(\bz | \mu_k(\bx;\xi), diag\{(\sigma_k(\bx;\xi))^2\}) \| \nonumber \\
&& \mathcal{N}(\bz | \mu_k^0, diag\{(\sigma_k^0)^2\}) \Big).
\eea

\subsubsection{Proof of Theorem 1}
\label{app:1.2}

Let $\bz = (\bz_1, \cdots, \bz_d) \sim N(\mu, diag\{(\sigma)^2\})$ and $\bz|\bx \sim N(\mu(\bx), diag\{(\sigma(\bx))^2\})$. For understanding following equations, we denote the $j$th elements of $\mu$, $\mu(\bx)$, $(\sigma)^2$ and $(\sigma(\bx))^2$ by $\mbE (\bz_j)$, $\mbE (\bz_j|\bx)$, $\mbox{Var}(\bz_j)$ and $\mbox{Var}(\bz_j|\bx)$, respectively. The KL-divergence between two univariate normal distributions leads to a close form of the KL-divergence from $N(\mu, diag\{(\sigma)^2\})$ to $N(\mu(\bx), diag\{(\sigma(\bx))^2\})$, 
\bea \label{eq:kl_expectation}
&& \mbE_{\bx} \Big[ \mathcal{KL} \Big( N(\mu(\bx), diag\{(\sigma(\bx))^2\}) \| \nonumber \\ 
&& N(\mu, diag\{(\sigma)^2\}) \Big) \Big] \nonumber \\
&=& \mbE_{\bx} \Big[ \frac{1}{2} \sum_{j=1}^d \Big( \frac{[\mbE(\bz_j|\bx) - \mbE(\bz_j)]^2}{\mbox{Var}(\bz_j)} + \frac{\mbox{Var}(\bz_j|\bx)}{\mbox{Var}(\bz_j)} \nonumber \\
&& + \log \frac{\mbox{Var}(\bz_j)}{\mbox{Var}(\bz_j|\bx)} - 1 \Big) \Big] \nonumber \\
&=& \frac{1}{2} \sum_{j=1}^d \Big( \frac{\mbox{Var}_{\bx} [\mbE(\bz_j|\bx)]}{\mbox{Var}(\bz_j)} + \frac{\mbE_{\bx} [\mbox{Var}(\bz_j|\bx)]}{\mbox{Var}(\bz_j)} \nonumber
\eea
\bea
&& + \mbE_{\bx} \left[ \log \frac{\mbox{Var}(\bz_j)}{\mbox{Var}(\bz_j|\bx)} \right] \Big) - \frac{d}{2} \nonumber \\
&=& \frac{1}{2} \sum_{j=1}^d \left( \frac{\mbox{Var}(\bz_j)}{\mbox{Var}(\bz_j)} + \mbE_{\bx} \left[ \log \frac{\mbox{Var}(\bz_j)}{\mbox{Var}(\bz_j|\bx)} \right] \right) - \frac{d}{2} \nonumber \\
&=& \frac{1}{2} \sum_{j=1}^d \mbE_{\bx} \left[ \log \frac{\mbox{Var}(\bz_j)}{\mbox{Var}(\bz_j|\bx)} \right].
\eea

Using above equality \eqref{eq:kl_expectation}, the expectation of the second term in \eqref{eq:kl_upperbound} can be written as:
\bea
&& \sum_{k=1}^K \mbE_{\bx} \Big[w(\by=k|\bx;\eta) \nonumber \\
&& \times \mathcal{KL} \Big(\mathcal{N}(\bz | \mu_k(\bx;\xi), diag\{(\sigma_k(\bx;\xi))^2\})  \| \nonumber \\
&& \mathcal{N}(\bz | \mu_k^0, diag\{(\sigma_k^0)^2\}) \Big) \Big] \nonumber \\
&=& \sum_{k=1}^K \int_{\bx} p(\bx) w(\by=k|\bx;\eta) \nonumber \\
&& \times \mathcal{KL} \Big(\mathcal{N}(\bz | \mu_k(\bx;\xi), diag\{(\sigma_k(\bx;\xi))^2\})  \| \nonumber \\
&& \mathcal{N}(\bz | \mu_k^0, diag\{(\sigma_k^0)^2\}) \Big) d\bx \nonumber \\
&=& \sum_{k=1}^K \int_{\bx} p(\by=k) p(\bx|\by=k;\eta) \nonumber \\
&& \times \mathcal{KL} \Big(\mathcal{N}(\bz | \mu_k(\bx;\xi), diag\{(\sigma_k(\bx;\xi))^2\}) \| \nonumber \\
&& \mathcal{N}(\bz | \mu_k^0, diag\{(\sigma_k^0)^2\}) \Big) d\bx \nonumber \\
&=& \sum_{k=1}^K w_k^0 \cdot \mbE_{\bx|\by=k} \Big[ \nonumber \\ 
&& \mathcal{KL} \Big(\mathcal{N}(\bz | \mu_k(\bx;\xi), diag\{(\sigma_k(\bx;\xi))^2\}) \| \nonumber \\
&& \mathcal{N}(\bz | \mu_k^0, diag\{(\sigma_k^0)^2\}) \Big) \Big] \nonumber \\
&=& \frac{1}{2} \sum_{k=1}^K w_k^0 \sum_{j=1}^d \mbE_{\bx|\by=k} \left[ \log \frac{(\sigma^0_{kj})^2}{(\sigma_{kj}(\bx;\xi))^2} \right],
\eea
where $\sigma^0_{kj}$ and $\sigma_{kj}(\bx;\xi)$ are the $j$th element of $\sigma_k^0$ and $\sigma_k(\bx;\xi)$, respectively. The second equality above equations  holds for $p(\by|\bx;\eta) = w(\by|\bx;\eta)$.

Let $\bz^k = (\bz_1^k, \cdots, \bz_d^k) \sim N(\mu_k^0, diag\{(\sigma_k^0)^2\})$, the distribution of the $k$th component of the mixture distribution $p(\bz)$. First, the upper bound of \eqref{eq:kl_upperbound} is written as
\bea \label{eq:upper}
&& \mbE_{\bx}\mathcal{KL}^u\left( q(\bz|\bx;\eta, \xi) \| p(\bz) \right) \nonumber \\
&=& \mbE_{\bx}\mathcal{KL}(w(\by|\bx;\eta) \|p(\by)) \nonumber \\
&& + \frac{1}{2} \sum_{k=1}^K w_k^0 \sum_{j = 1}^d \mbE_{\bx|\by=k} \left[ \log \frac{\mbox{Var}(\bz_j^k)}{\mbox{Var}(\bz_j^k|\bx;\xi)} \right] \nonumber \\
&\leq& \mbE_{\bx}\mathcal{KL}(w(\by|\bx;\eta) \|p(\by)) \nonumber \\
&& + \sum_{k=1}^K \sum_{j = 1}^d \frac{w_k^0}{2} \log \mbE_{\bx|\by=k} \left[  \frac{\mbox{Var}(\bz_j^k)}{\mbox{Var}(\bz_j^k|\bx;\xi)} \right] \nonumber \\
&\leq& \mbE_{\bx}\mathcal{KL}(w(\by|\bx;\eta) \|p(\by)) \nonumber \\
&& + \sum_{k=1}^K \sum_{j = 1}^d \frac{w_k^0}{2} \mbE_{\bx|\by=k} \left[  \frac{\mbox{Var}(\bz_j^k)}{\mbox{Var}(\bz_j^k|\bx;\xi)} \right] - \frac{d}{2},
\eea
by Jensen's inequality and $\log x \leq x-1$.

Similarly, the lower bound of \eqref{eq:kl_upperbound} is derived as
\bea \label{eq:lower}
&& \mbE_{\bx}\mathcal{KL}^u\left( q(\bz|\bx;\eta, \xi) \| p(\bz) \right) \nonumber \\
&=& \mbE_{\bx}\mathcal{KL}(w(\by|\bx;\eta) \|p(\by)) \nonumber \\
&& - \frac{1}{2} \sum_{k=1}^K w_k^0 \sum_{j = 1}^d \mbE_{\bx|\by=k} \left[ \log \frac{\mbox{Var}(\bz_j^k|\bx;\xi)}{\mbox{Var}(\bz_j^k)} \right] \nonumber \\
&\geq& \mbE_{\bx}\mathcal{KL}(w(\by|\bx;\eta) \|p(\by)) \nonumber \\
&& - \sum_{k=1}^K \sum_{j = 1}^d \frac{w_k^0}{2} \log \mbE_{\bx|\by=k} \left[ \frac{\mbox{Var}(\bz_j^k|\bx;\xi)}{\mbox{Var}(\bz_j^k)} \right] \nonumber \\
&\geq& \mbE_{\bx}\mathcal{KL}(w(\by|\bx;\eta) \|p(\by)) \nonumber \\ 
&& - \sum_{k=1}^K \sum_{j = 1}^d \frac{w_k^0}{2} \mbE_{\bx|\by=k} \left[ \frac{\mbox{Var}(\bz_j^k|\bx;\xi)}{\mbox{Var}(\bz_j^k)} \right] + \frac{d}{2}.
\eea

Therefore, combining \eqref{eq:upper} and \eqref{eq:lower}, 
\bea
&& \frac{d}{2} - \sum_{k=1}^K \sum_{j = 1}^d \frac{w_k^0}{2} \mbE_{\bx|\by=k} \left[ \frac{\mbox{Var}(\bz_j^k|\bx;\xi)}{\mbox{Var}(\bz_j^k)} \right] \\
&\leq& \mbE_{\bx}\mathcal{KL}^u\left( q(\bz|\bx;\eta, \xi) \| p(\bz) \right) - \mbE_{\bx}\mathcal{KL}(w(\by|\bx;\eta) \|p(\by)) \nonumber \\
&\leq& \sum_{k=1}^K \sum_{j = 1}^d \frac{w_k^0}{2} \mbE_{\bx|\by=k} \left[  \frac{\mbox{Var}(\bz_j^k)}{\mbox{Var}(\bz_j^k|\bx;\xi)} \right] - \frac{d}{2}, 
\eea
completes the proof.



\subsubsection{Mutual Information Derivation}
\label{app:1.3}

Here, we denote the mutual information as $I(\cdot,\cdot)$ and the expectation of the KL-divergence upper bound \eqref{eq:kl_upperbound} is
\bea \label{eq:kl_upper}
&& \mbE_{p(\bx)}\mathcal{KL}^u\left( q(\bz|\bx;\eta, \xi) \| p(\bz) \right) \nonumber \\
&=& \mbE_{p(\bx)} \left[ \mathcal{KL}(w(\by|\bx;\eta) \| p(\by)) \right] \nonumber \\
&& + \mbE_{p(\bx)} \Big[ \sum_{k=1}^K w(\by=k|\bx;\eta) \nonumber \\
&& \times \mathcal{KL}\left( q(\bz|\bx, \by=k;\xi) \| p(\bz|\by=k) \right) \Big].
\eea

We assume that $w(\by|\bx;\eta) = p(\by|\bx;\eta)$.
\bea
&& \mbE_{p(\bx)} [\mathcal{KL}\left( w(\by|\bx;\eta) \| p(\by) \right)] \nonumber \\
&=& \mbE_{p(\bx)} \mbE_{w(\by|\bx;\eta)} \left[\log \frac{w(\by|\bx;\eta)}{w(\by)}\right] \nonumber \\
&& + \mbE_{p(\bx)} \mbE_{w(\by|\bx;\eta)} [\log w(\by;\eta)] \nonumber \\
&& - \mbE_{p(\bx)} \mbE_{w(\by|\bx;\eta)} [\log p(\by)] \nonumber \\
&=& \mbE_{p(\bx)} [\mathcal{KL}(w(\by|\bx;\eta) \| w(\by;\eta))] \nonumber \\ 
&& + \mbE_{w(\by;\eta)} \left[\log \frac{w(\by;\eta)}{p(\by)}\right] \nonumber \\
&=& \int p(\bx) w(\by|\bx;\eta) \log \frac{w(\by|\bx;\eta)}{w(\by;\eta)} d\by d\bx \nonumber \\
&& + \mathcal{KL}(w(\by;\eta) \| p(\by)) \nonumber \\
&=& \int p(\bx, \by;\eta) \log \frac{p(\bx, \by;\eta)}{p(\bx) w(\by;\eta)} d\by d\bx \nonumber \\
&& + \mathcal{KL}(w(\by;\eta) \| p(\by))  \nonumber \\
&=& I(\bx,\by;\eta) + \mathcal{KL}(w(\by;\eta) \| p(\by)),
\eea
where $w(\by;\eta) = \mbE_{p(\bx)}[w(\by|\bx;\eta)]$ and $p(\bx,\by;\eta) = p(\bx) w(\by|\bx;\eta)$.

For the second term in \eqref{eq:kl_upper}, we define $q(\bz|\by;\eta,\xi)$ as
\bea
&& \mbE_{p(\bx|\by;\eta)}[q(\bz|\bx,\by;\xi)] \nonumber \\
&=& \int p(\bx|\by;\eta) q(\bz|\bx,\by;\xi) d\bx \nonumber \\
&=& \int \frac{p(\bx)w(\by|\bx;\eta)}{p(\by)} \frac{q(\bz,\by|\bx;\eta,\xi)}{w(\by|\bx;\eta)} d\bx \nonumber \\
&=& \int \frac{q(\bz,\by,\bx;\eta,\xi)}{p(\by)} d\bx = \frac{q(\bz,\by;\eta,\xi)}{p(\by)} \nonumber \\
&=& q(\bz|\by;\eta,\xi) \nonumber
\eea
where $p(\bx|\by;\eta) = p(\bx)w(\by|\bx;\eta) / p(\by)$.

Therefore, 
\bea
&& \mbE_{p(\bx)} \Big[\sum_{k=1}^K w(\by=k|\bx;\eta) \nonumber \\
&& \times \mathcal{KL}\Big( q(\bz|\bx,\by=k;\xi) \| p(\bz|\by=k) \Big)\Big] \nonumber \\
&=& \mbE_{p(\bx) w(\by|\bx;\eta) q(\bz|\bx,\by;\xi)} \left[ \log \frac{q(\bz|\bx,\by;\xi)}{p(\bz|\by)} \right] \nonumber \\
&=& \mbE_{p(\bx) w(\by|\bx;\eta) q(\bz|\bx,\by;\xi)} \left[ \log \frac{q(\bz|\bx,\by;\xi) q(\bz|\by;\eta,\xi)}{p(\bz|\by) q(\bz|\by;\eta,\xi)} \right] \nonumber \\
&=& \mbE_{p(\bx) w(\by|\bx;\eta) q(\bz|\bx,\by;\xi)} \left[ \log \frac{q(\bz|\bx,\by;\xi)}{q(\bz|\by;\eta,\xi)} \right] \nonumber \\
&& + \mbE_{p(\bx) w(\by|\bx;\eta) q(\bz|\bx,\by;\xi)} \left[ \log \frac{q(\bz|\by;\eta,\xi)}{p(\bz|\by)} \right] \nonumber \\
&=& \mbE_{p(\by) p(\bx|\by;\eta) q(\bz|\bx,\by;\xi)} \left[ \log \frac{q(\bz|\bx,\by;\xi)}{q(\bz|\by;\eta,\xi)} \right] \nonumber \\
&& + \mbE_{p(\by) p(\bx|\by;\eta) q(\bz|\bx,\by;\xi)} \left[ \log \frac{q(\bz|\by;\eta,\xi)}{p(\bz|\by)} \right] \nonumber 
\eea
\bea
&=& \mbE_{p(\by) q(\bx,\bz|\by;\eta,\xi)} \left[ \log \frac{q(\bz|\bx,\by;\eta,\xi)}{q(\bz|\by;\eta,\xi)} \right] \nonumber \\
&& + \mbE_{p(\by) q(\bz|\by;\eta,\xi)} \left[ \log \frac{q(\bz|\by;\eta,\xi)}{p(\bz|\by)} \right] \nonumber \\
&=& \mbE_{p(\by)} \int q(\bx,\bz|\by;\eta,\xi) \log \frac{q(\bx,\bz|\by;\eta,\xi)}{q(\bz|\by;\eta,\xi)p(\bx|\by)} d\bx d\bz \nonumber \\
&& + \mbE_{p(\by)} [\mathcal{KL}(q(\bz|\by;\eta,\xi) \| p(\bz|\by))] \nonumber \\
&=& \mbE_{p(\by)} [I(\bx,\bz|\by;\eta,\xi)] \nonumber \\
&& + \mbE_{p(\by)} [\mathcal{KL}(q(\bz|\by;\eta,\xi) \| p(\bz|\by))],
\eea
where $p(\bx|\by;\eta)p(\by) = p(\bx)w(\by|\bx;\eta)$ and $q(\bx,\bz|\by;\eta,\xi) = p(\bx|\by;\eta) q(\bz|\bx,\by;\xi)$.

In conclusion, the expectation of the KL-divergence upper bound is written in mutual information terms as
\bea
&& \mbE_{p(\bx)}[\mathcal{KL}^u\left( q(\bz|\bx;\eta, \xi) \| p(\bz) \right)] \nonumber \\
&=& I(\bx,\by;\eta) + \mathcal{KL}(w(\by;\eta) \| p(\by)) \nonumber \\
&& + \mbE_{p(\by)} [I(\bx,\bz|\by;\eta,\xi)] \nonumber \\
&& + \mbE_{p(\by)} [\mathcal{KL}(q(\bz|\by;\eta,\xi) \| p(\bz|\by))].
\eea

\subsection{MNIST Dataset}
\label{app:2}

\subsubsection{Evaluation with various tuning parameter $\beta$s}
\label{app:2.1}

\begin{table*}[ht]
    \centering
    \begin{tabular}{lrrrrrrrr}
    \toprule
    $\beta$ & 0.1 & 0.25 & 0.5 & 0.75 & 1 & 5 & 10 & 50 \\
    \midrule
    negative SSIM & 0.43 & 0.438 & 0.44 & 0.445 & 0.435 & 0.439 & 0.418 & 0.316 \\
    KL-divergence & 19.654 & 9.654 & 8.481 & 7.684 & 6.75 & 4.064 & 2.729 & 1.855 \\
    error(\%) & 3.29 & 3.23 & 3.25 & 3.33 & 3.46 & 3.74 & 4.07 & 3.38 \\
    \bottomrule
    \end{tabular}
\caption{The negative SSIM, KL-divergence, and the classification error of 100 labeled dataset for various $\beta$ values.}
\label{table:mnist_path}
\end{table*}

Denote the set of indices for the test dataset $I_{test}$. The KL-divergence measures how different the trained posterior distribution is from the pre-designed prior distribution and is defined by:
\bea \label{eq:kl_metric}
\frac{1}{|I_{test}|} \sum_{i \in I_{test}} \mathcal{KL}^u\left( q(\bz|x_i;\eta, \xi) \| p(\bz) \right).
\eea

The negative average single-scale structural similarity (negative SSIM) of $\bX$ is defined by
\bea \label{eq:ssim}
\frac{1}{2} \bigg( 1 - \frac{1}{|\bX|^2} \sum_{(\bx, \bx') \in \bX \times \bX} \mbox{SSIM}(\bx, \bx') \bigg),
\eea
where $\mbox{SSIM}(\bx, \bx')$ for $\bx, \bx' \in \bX$ is the similarity measure between two images $\bx$, $\bx'$ and has a value on $[-1,1]$ \cite{wang2004image, zheng2019disentangling}. So, negative SSIM has a value on $[0,1]$ and indicates how many diverse images $\bX$ consists of. For $\bX$ being the set of images generated from a fitted VAE model, $\mbox{negative SSIM}(\bX)$ indicates how expressive the model is. We consist $\bX$ with the images generated from our trained decoder, where each image is produced by 31$\times$31 equally spaced grid points on the 2-dimensional latent space. 

The classification error is given by 
\bea \label{eq:classification_error}
\frac{1}{|I_{test}|} \sum_{i \in I_{test}} \mathbb{I} \left(y_i \neq \arg \max_k (w(\by=k|x_i; \eta)) \right),
\eea
where $\mathbb{I}(\cdot)$ is a indicator function. The classification error shows the degree of discrepancy between the assigned label by the posterior probability and the true observation label. Thus, the VAE model with low classification error can separate the latent space into subsets on which the labels of observations are well identified. 

Table \ref{table:mnist_path} indicates that the KL-divergence mostly depends on $\beta$. Because $\beta$ indirectly controls the weight of the KL-divergence loss term, the KL-divergence for a large $\beta$ dominates in our objective function \cite{lucas2019don}. Also, we can observe that the diversity of generated sample (\mbox{negative SSIM}) increases as $\beta$ decreases. For all $\beta$s, the latent space is clearly separated according to the true labels. It means that the classification performance does not depend on $\beta$ since the weight of additional classification loss terms is scaled by $1/\beta$.

\subsubsection{Controlling Proximity and Interpolation}
\label{app:2.2}

\begin{figure}[ht]
    \centering
    \includegraphics[width=0.9\columnwidth]{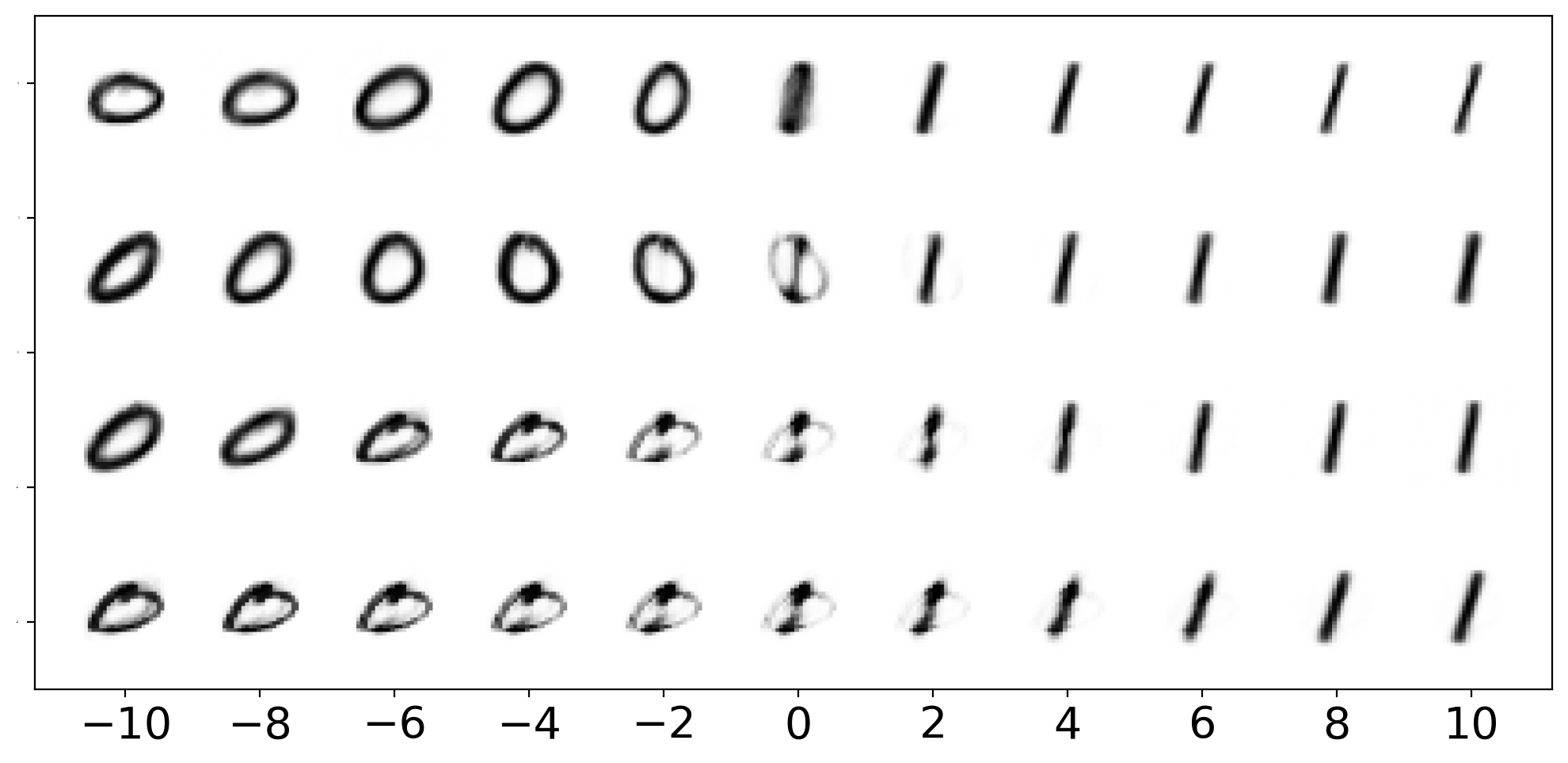}
    \caption{Interpolated images produced by various pre-designs of the prior distribution. From top to bottom, the distance between the center of the two components increases as 8, 16, 24, and 32.}
    \label{fig:radius_diversity}
\end{figure}

We investigate the patterns of interpolated images according to various pre-designs of the proximity of the prior mixture components. We use a subset of the MNIST dataset with only 0 and 1 labels, so the 2-component Gaussian mixture distribution is used. All Gaussian components have diagonal covariance matrices, and their diagonal elements are all 4. We set the location parameters as $(-r, 0), (r, 0)$, which determines the proximity. We set the distances between the two location parameters as 8, 16, 24, and 32. The images were generated from 11 points of equally spacing line segment from $(-10, 0)$ to $(10, 0)$ on the 2-dimensional latent space. All experiment settings are equally given by those of MNIST dataset analysis in Section \ref{sec:4.1}. Figure \ref{fig:radius_diversity} shows that the interpolated images vary more slowly if the two location parameters of $p(\bz)$ are farther from each other and the latent space is effectively adapted to pre-designed characteristics.


\begin{table}[ht]
  \centering
  \begin{tabular}{cc}
    \toprule
    \textbf{encoder} & \textbf{decoder} \\
    \midrule
    (28, 28, 1) image & $d$-dimension latent variable\\
    Flatten & Dense(128), ReLU \\ 
    Dense(256), ReLU & Dense(256), ReLU \\ 
    Dense(128), ReLU & Dense(784), ReLU \\ 
    2 $\times $ K $\times$ Dense($d$), Linear & Reshape(28, 28, 1) \\
    \bottomrule
  \end{tabular}
\caption{Model descriptions of the encoder and decoder. Here, $K$ denotes the number of classes, and $d$ is the dimension of the latent variable.}
\label{table:mnist_network}
\end{table}

\begin{table}[ht]
  \centering
  \begin{tabular}{c}
    \toprule
    \textbf{classifier} \\
    \midrule
    input: 28 $\times$ 28 $\times$ 1 Image \\
    Conv2D(32, 5, 1, `same'), BN, LeakyReLU($\alpha$=0.1) \\
    MaxPool2D(pool size=(2, 2), strides=2) \\
    SpatialDropout2D(rate=0.5) \\
    Conv2D(64, 3, 1, `same'), BN, LeakyReLU($\alpha$=0.1) \\
    MaxPool2D(pool size=(2, 2), strides=2) \\
    SpatialDropout2D(rate=0.5) \\
    Conv2D(128, 3, 1, `same'), BN, LeakyReLU($\alpha$=0.1) \\
    MaxPool2D(pool size=(2, 2), strides=2) \\
    SpatialDropout2D(rate=0.5) \\
    GlobalAveragePooling2D \\
    Dense(64), BN, ReLU \\
    Dense(K), softmax \\
    \bottomrule
  \end{tabular}
\caption{Network structures of the classifier. Here, $K$ denotes the number of classes, and BN is Batch-Normalization layer \cite{ioffe2015batch}. Conv2D hyper-parameters are filters, kernel size, strides, and padding.}
\label{table:mnist_network_classifier}
\end{table}


\begin{table}[ht]
  \centering
  \begin{tabular}{lr}
    \toprule
    models & error(\%) \\
    \midrule
    M2\shortcite{kingma2014semi} & 11.97 \\ 
    M1+M2\shortcite{kingma2014semi} & 3.33 \\
    ADGM\shortcite{maaloe2016auxiliary} &\textbf{0.96} \\
    Disentangled-VAE\shortcite{li2019disentangled} & 2.71 \\
    Parted-VAE\shortcite{hajimiri2021semi} & 9.79 \\
    SHOT-VAE\shortcite{feng2021shot} & 3.12 \\ 
    \midrule
    Ours ($\beta=0.25$) & 3.23 \\
    \bottomrule
  \end{tabular}
\caption{Semi-supervised test classification errors with 100 labeled datasets (except for Parted-VAE, all results are from original papers).}
\label{table:mnist_cls_comparison}
\end{table}

\subsection{CIFAR-10 Dataset}
\label{app:3}

\subsubsection{V-nat of the EXoN}
\label{app:3.1}

\begin{figure*}[ht]
    \centering
    \includegraphics[width=0.6\textwidth]{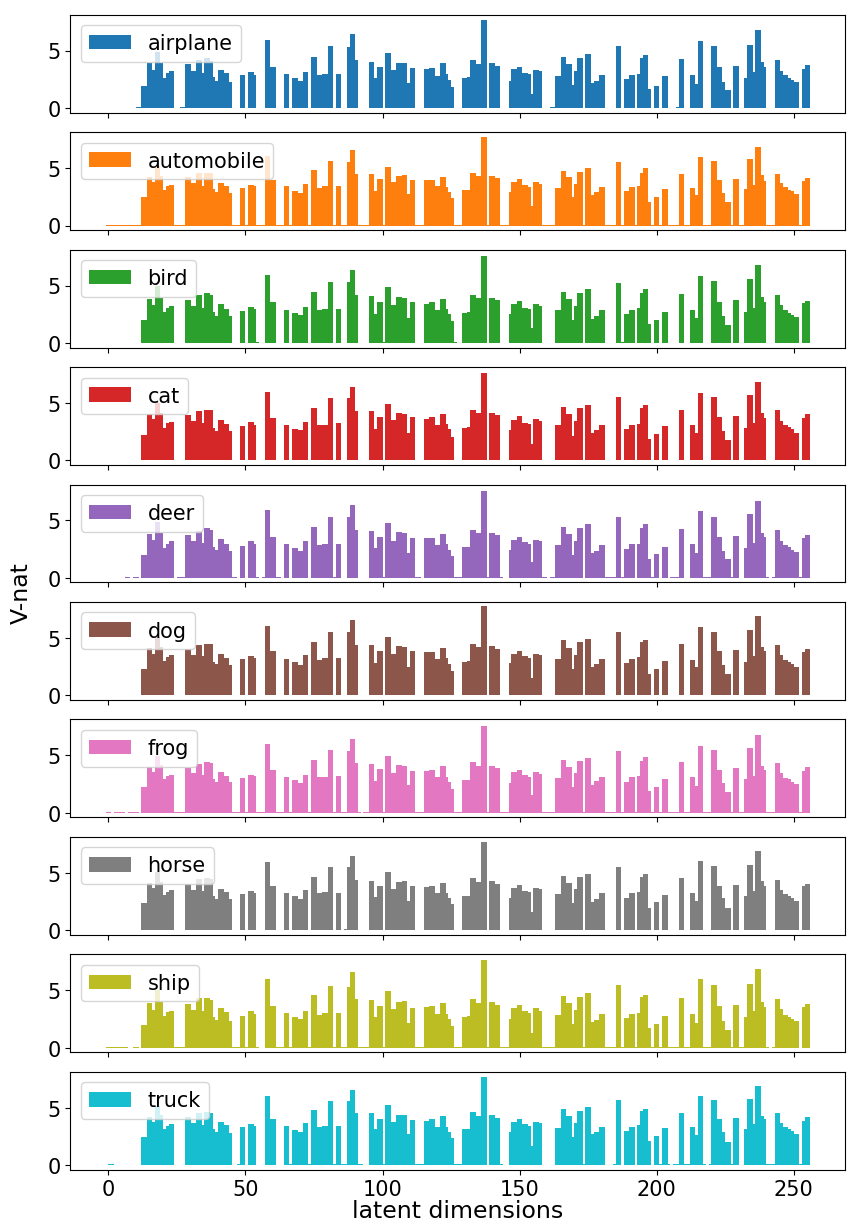}
    \caption{Visualization of 256-dimensional V-nat vector of the EXoN for all classes where $\beta = 0.05$.}
    \label{fig:all_measure}
\end{figure*}

\begin{figure*}[ht]
    \centering
    \includegraphics[width=0.7\textwidth]{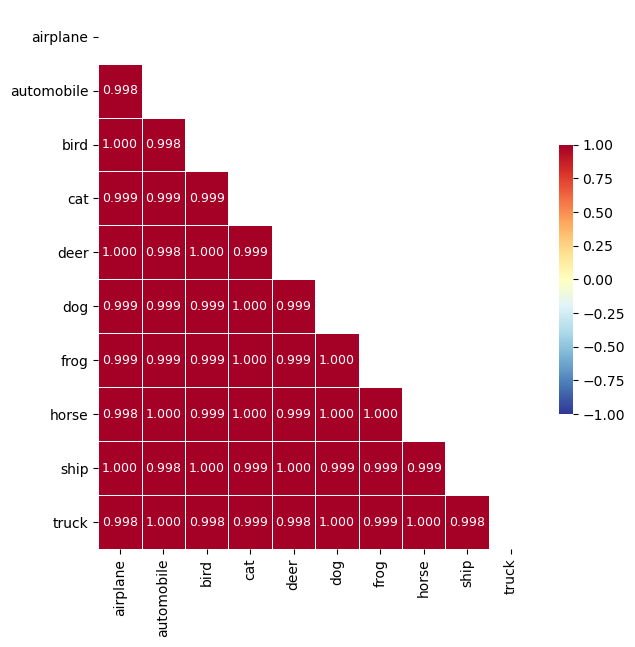}
    \caption{Correlation matrix between V-nat vectors of all classes with $\beta = 0.05$}
    \label{fig:vnat_corr}
\end{figure*}

Figure \ref{fig:all_measure} shows V-nat vectors for all classes in CIFAR-10 datasets, and $j$th element of V-nat vector is $\log \mbE_{\bx|\by=k} \left[ \mbox{Var}(\bz_j^k) / \mbox{Var}(\bz_j^k|\bx;\xi) \right]$. We found that the activated latent subspace of each class (latent dimensions which have a higher V-nat value than $\delta=1$) does not differ significantly from each other.

To show that the variability of generated images depends on the almost same latent subspace for all classes, the correlation matrix between V-nat vectors of all classes with $\beta = 0.05$ is shown in Figure \ref{fig:vnat_corr}. The correlation plot indicates that the V-nat vectors are remarkably correlated (correlation coefficients are almost 1), implying that the diversity of generated images depends on almost the same subspace.

\subsubsection{Manipulation on the activated latent subspace}
\label{app:3.2}

\begin{figure*}[h]
    \centering
    \subfigure[$\beta=0.01$]{
    \includegraphics[width=0.7\textwidth]{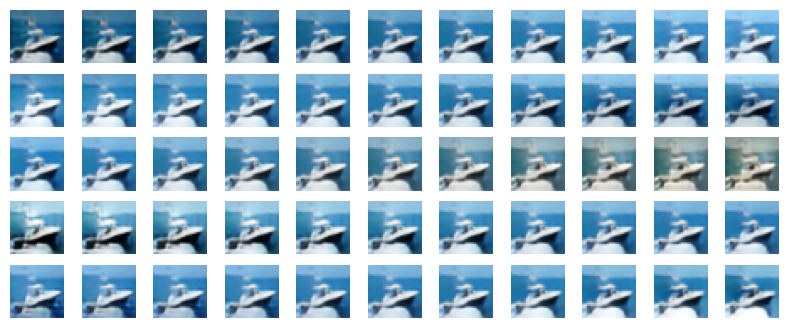}
    }
    \subfigure[$\beta=0.1$]{
    \includegraphics[width=0.7\textwidth]{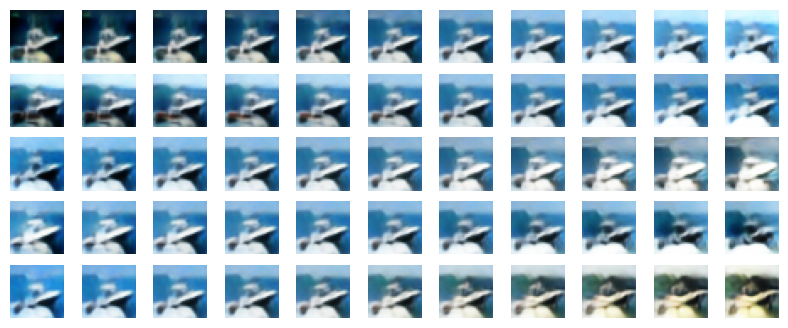}
    }
\caption{Manipulation on the single activated latent subspace and generated images.}
\label{fig:top5_axes}
\end{figure*}

\begin{figure*}[h]
    \centering
    \subfigure[$\beta=0.01$]{
    \includegraphics[width=0.7\textwidth]{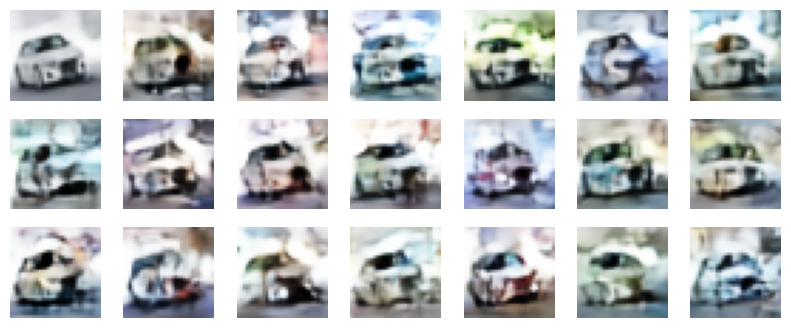}
    }
    \subfigure[$\beta=0.1$]{
    \includegraphics[width=0.7\textwidth]{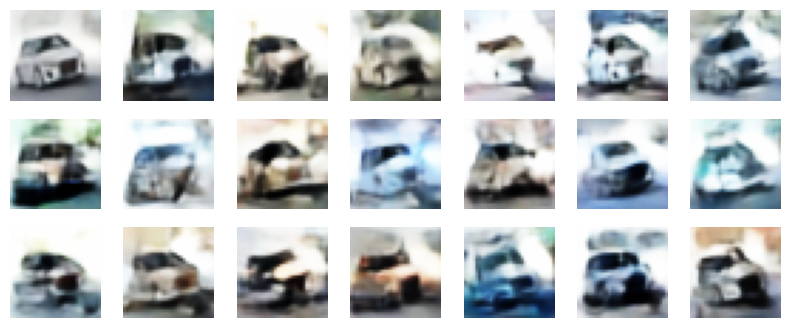}
    }
\caption{Manipulation on all activated latent subspace and generated images.}
\label{fig:blurmany}
\end{figure*}

In Figure \ref{fig:top5_axes}, 5 series of images are generated from latent variables where the value of the activated latent axis is changed from -3 to 3, and values of other latent dimensions are fixed. From top to bottom, latent axes having the top 5 most significant V-nat values are used for each $\beta$. As the value of each activated latent subspace changes, some different features of generated samples are varied (like brightness, the color and the shape of the object, or the color of the background). 

Note that characteristics of images change more significantly where the decoder variance parameter $\beta$ is large because a single activated latent dimension determines relatively more image characteristics due to the small cardinality of the activated latent subspace when $\beta$ is large. Disentangling properties represented by the activated latent subspace is left for our future work.

Furthermore, we manipulated all activated latent subspace, and it is visualized in Figure \ref{fig:blurmany} for each $\beta$. It shows images generated from latent variables perturbed by the uniform noise on $\cA_k(1)$, where the uniform noise is sampled from $U(-1.5, 1.5)$ and automobile class $k$. 

Most tops left images are reconstructed images given the unperturbed latent variable. As in manipulating the single activated latent subspace, this Figure confirms that a single activated latent dimension determines relatively more characteristics of a given image because the cardinality of the activated latent subspace is small for a large $\beta$.


\begin{figure*}[h]
    \centering
    \includegraphics[width=0.7\textwidth]{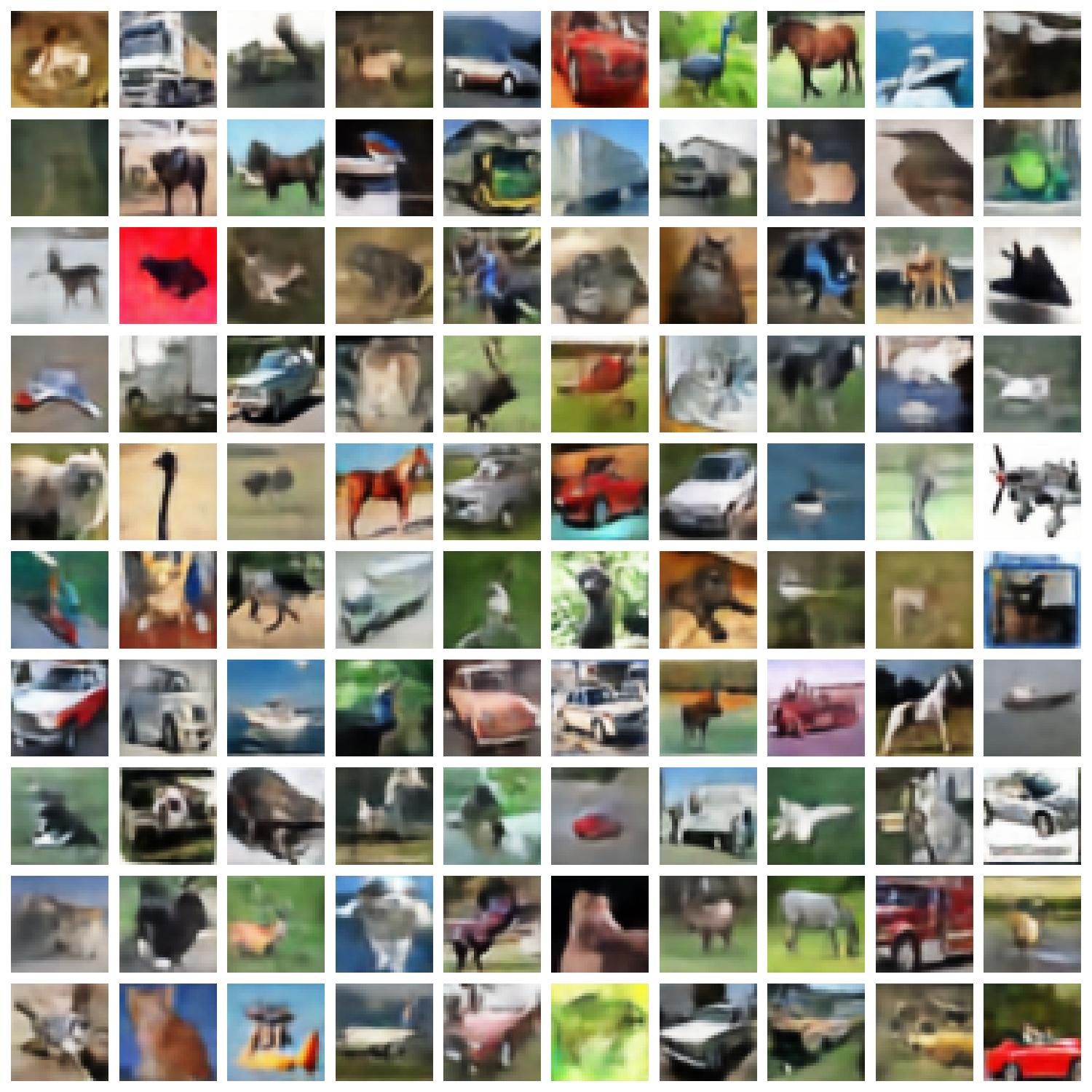}
    \caption{Recovered images given the train dataset where $\beta = 0.05$. It confirms that the proposed model could recover an original observation without loss of information.}
    \label{fig:train_recon}
\end{figure*}

\begin{figure*}[ht]
    \centering
    \includegraphics[width=0.7\textwidth]{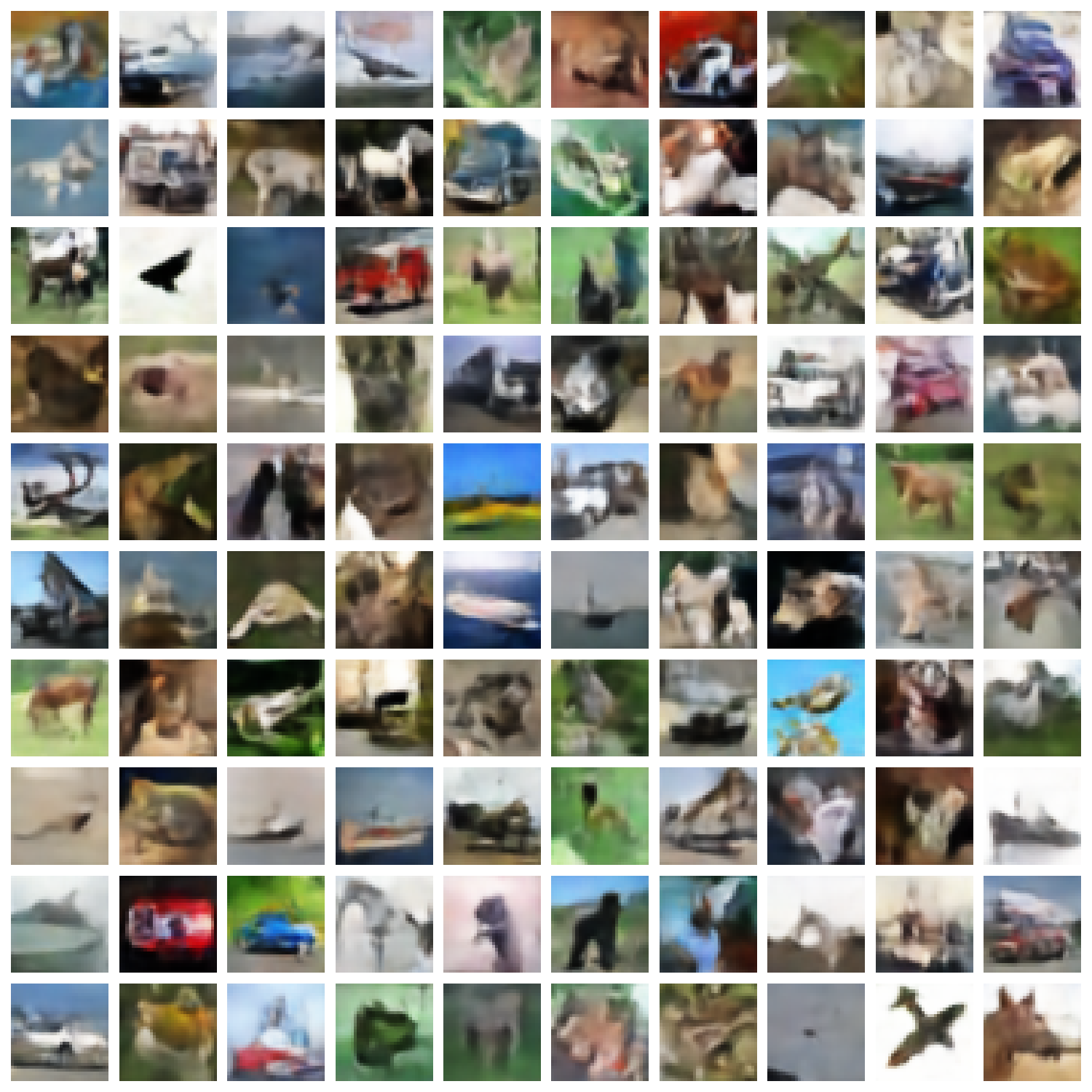}
    \caption{Recovered images given the test dataset where $\beta = 0.05$.}
    \label{fig:test_recon}
\end{figure*}


\subsection{Experiment settings}
\label{app:4}

We run all experiments using Geforce GTX 2080 Ti GPU and 16GB RAM, and our experimental codes are all available with Tensorflow \cite{tensorflow2015-whitepaper}. Pre-designs of the prior distribution:
\bed
    \item MNIST: The conceptual centers and variances are given by $\mu_k^0 = r \cdot \big( \cos(\frac{\pi}{10}(2k-1)), \sin(\frac{\pi}{10}(2k-1)) \big)$, where $r = 4 / \sin(\frac{\pi}{10})$ and $(\sigma_k^0)^2 = (4, 4)$ for $k = 1, \cdots, 10$.
    \item CIFAR-10: The mean vector of $p(\bz)$ consists of two subvectors, a 10-dimensional label-relevant mean which is the one-hot vector for the corresponding label, and a 246-dimensional label-irrelevant mean which has all zero values \cite{zheng2019disentangling}. All covariance matrices are commonly given by $diag\{({0.1}_{10}, {1}_{246})\}$ where ${a}_k$ for ${a} \in \mathbb{R}$ and $k \in \mathbb{N}$ is a $k$-dimensional vector whose elements are all $a$.
\eed

Stochastic image augmentations we used:
\bed
    \item MNIST: random rotation, random cropping
    \item CIFAR-10: random horizontal flip, random cropping
\eed
\begin{table*}[ht]
  \centering
  \begin{tabular}{lrrrrrrr}
    \toprule
    dataset & epochs & batch size($U, L$) & drop rate & $\lambda_1$ & $\lambda_2(e)$ \\
    \midrule
    MNIST & 100 & (128, 32) & 0.5 & 6000 & $\lambda_1 \exp\left(-5 (1 - \min\{1, \frac{c}{10}\})^2\right)$ \\
    CIFAR-10 & 600 & (128, 32) & 0.1 & 5000 & $\lambda_1 \exp\left(-5 (1 - \min\{1, \frac{c}{50}\})^2\right)$ \\
    \bottomrule
  \end{tabular}
\caption{Detailed experiment settings for the MNIST and the CIFAR-10 datasets.}
\label{table:setting}
\end{table*}

In implementation, we used Adam\cite{kingma2014adam} optimizer for both datasets. The initial learning rate of the MNIST experiments is 0.003, and we decayed the learning rate exponentially by multiplying $\exp\left(-5 (1 - \frac{100 - c}{100})^2\right)$ after 10 epoch, where $c$ is current epoch number. And the initial learning rate of CIFAR-10 experiments is 0.001, and we multiplied the learning rate by 0.1 at 250, 350, 450, and 550 epochs. After 550 epoch, we decayed the learning rate exponentially by multiplying $\exp\left(-5 (1 - \frac{600 - c}{600 - 550})^2\right)$. For both datasets, we applied decoupled weight decay \cite{loshchilov2017decoupled} with a factor of 0.0005. Other experiment settings are shown in Table \ref{table:setting}.


\begin{table*}[ht]
  \centering
  \begin{tabular}{cc}
    \toprule
    \textbf{encoder} & \textbf{decoder} \\
    \midrule
    input: 32 $\times$ 32 $\times$ 3 Image & input: $d$-dimension latent variable\\
    Conv2D(32, 5, 2, `same'), BN, ReLU & Dense(4 $\times$ 4 $\times$ 512), BN, ReLU\\
    Conv2D(64, 4, 2, `same'), BN, ReLU & Reshape(4, 4, 512) \\
    Conv2D(128, 4, 2, `same'), BN, ReLU & Conv2DTranspose(256, 5, 2, `same'), BN, ReLU \\
    Conv2D(256, 4, 1, `same'), BN, ReLU & Conv2DTranspose(128, 5, 2, `same'), BN, ReLU \\
    Flatten & Conv2DTranspose(64, 5, 2, `same'), BN, ReLU \\
    Dense(1024), BN, Linear & Conv2DTranspose(32, 5, 1, `same'), BN, ReLU \\
    mean: K $\times$ Dense($d$), Linear & Conv2D(3, 4, 1, `same'), tanh \\
    log-variance: K $\times$ Dense($d$), Linear & - \\
    \bottomrule
  \end{tabular}
\caption{Model descriptions of the encoder and decoder. Here, $K$ denotes the number of classes, $d$ is the dimension of the latent variable, and BN is Batch-Normalization layer \cite{ioffe2015batch}. Conv2D and Conv2DTranspose hyper-parameters are filters, kernel size, strides, and padding.}
\label{table:cifar10_network}
\end{table*}

\begin{table*}[ht]
  \centering
  \begin{tabular}{c}
    \toprule
    \textbf{classifier} \\
    \midrule
    input: 32 $\times$ 32 $\times$ 3 Image \\
    Conv2D(128, 3, 1, `same'), BN, LeakyReLU($\alpha$=0.1) \\
    Conv2D(128, 3, 1, `same'), BN, LeakyReLU($\alpha$=0.1) \\
    Conv2D(128, 3, 1, `same'), BN, LeakyReLU($\alpha$=0.1) \\
    MaxPool2D(pool size=(2, 2), strides=2) \\
    SpatialDropout2D(rate=0.1) \\
    Conv2D(256, 3, 1, `same'), BN, LeakyReLU($\alpha$=0.1) \\
    Conv2D(256, 3, 1, `same'), BN, LeakyReLU($\alpha$=0.1) \\
    Conv2D(256, 3, 1, `same'), BN, LeakyReLU($\alpha$=0.1) \\
    MaxPool2D(pool size=(2, 2), strides=2) \\
    SpatialDropout2D(rate=0.1) \\
    Conv2D(512, 3, 1, `same'), BN, LeakyReLU($\alpha$=0.1) \\
    Conv2D(256, 3, 1, `same'), BN, LeakyReLU($\alpha$=0.1) \\
    Conv2D(128, 3, 1, `same'), BN, LeakyReLU($\alpha$=0.1) \\
    GlobalAveragePooling2D \\
    Dense(K), softmax \\
    \bottomrule
  \end{tabular}
\caption{Network structures of the classifier. Here, $K$ denotes the number of classes, and BN is Batch-Normalization layer \cite{ioffe2015batch}. Conv2D hyper-parameters are filters, kernel size, strides, and padding.}
\label{table:cifar10_network_classifier}
\end{table*}

\end{document}